\renewcommand{\cite}{\citep}
\newlength\savewidth
\newtheorem{theorem}{Theorem}[section]
\newtheorem{definition}[theorem]{Definition}
\newtheorem{lemma}[theorem]{Lemma}
\newtheorem{proposition}[theorem]{Proposition}
\newtheorem{example}[theorem]{Example}
\newtheorem{assumption}[theorem]{Assumption}
\newtheorem*{rep@definition}{\rep@title}
\newcommand{\newrepdefinition}[2]{%
	\newenvironment{rep#1}[1]{%
		\def\rep@title{#2 \ref{##1}}%
		\begin{rep@definition}}%
		{\end{rep@definition}}}
\def\eqref#1{equation~\ref{#1}}
\def\1{\bm{1}}
\def\eps{{\epsilon}}
\DeclareMathAlphabet{\mathsfit}{\encodingdefault}{\sfdefault}{m}{sl}
\SetMathAlphabet{\mathsfit}{bold}{\encodingdefault}{\sfdefault}{bx}{n}
\newcommand{\pdata}{p_{\rm{data}}}
\newcommand{\R}{\mathbb{R}}
\DeclareMathOperator*{\argmin}{arg\,min}
\DeclareMathOperator{\Tr}{Tr}
\newcommand{\aug}{\mathcal{A}}
\newcommand{\real}{\mathbb{R}}
\newcommand{\convf}{\mathcal{F}_{\textup{conv}}}
\newcommand{\linearf}{\mathcal{F}_{\textup{linear}}}
\newcommand{\unif}{\mathcal{F}_{\textup{uni}}}
\newcommand{\lipf}{\mathcal{F}_{\textup{Lip}, \kappa}}
\newcommand{\loss}{\mathcal{L}}
\newcommand{\eloss}{\widehat{\mathcal{L}}}
\newcommand{\reluf}{\mathcal{F}_{\text{ReLU}}}
\newcommand{\data}{\mathcal{X}}
\newcommand{\fclass}{\mathcal{F}}
\newcommand{\id}{\textup{id}}
\newcommand{\Exp}{\mathbb{E}}
\newcommand{\norm}[1]{\left\lVert#1\right\rVert}
\newcommand{\ppos}{p_{\rm{pos}}}
\newcommand{\feig}{f_{\text{eig}}}
\newcommand{\identity}{\mathbb{I}}
\newcommand{\pmin}{P_{\text{min}}}
\newcommand{\pmax}{P_{\text{max}}}
\newcommand{\pif}{\pi_f}
\newcommand{\sgn}{\text{sgn}}
\newcommand{\binary}{\text{bin}}
\newcommand{\laplacian}{\mathbb{L}}
\newcommand{\tlaplacian}{\widetilde{\mathbb{L}}}
\newcommand{\rad}[1]{\widehat{\mathcal{R}}_{#1}}
\newcommand{\npre}{{n_\textup{pre}}}
\newcommand{\nds}{{n_\textup{ds}}}
\newcommand{\ncluster}{r}
\newcommand{\nclustertrue}{{r_0}}
\newcommand{\eExp}{\widehat{\mathbb{E}}}
\def\shownotes{1}  \ifnum\shownotes=1
\newcommand{\authnote}[2]{{[#1: #2]}}
\newcommand{\authnote}[2]{}
\def\shownotes{1}  \ifnum\shownotes=1
\newcommand{\authornotenonurgent}[2]{{[#1: #2]}}
\newcommand{\authornotenonurgent}[2]{}
\begin{document}

\begin{center}
	{\LARGE A Theoretical Study of Inductive Biases in Contrastive Learning} \\
	\vspace{.8cm}
	{\large Jeff Z. HaoChen ~~~~ Tengyu Ma} \\
	\vspace{.4cm}
	{\large Stanford University} \\
	\vspace{.05cm}
	Department of Computer Science \\
	\vspace{.4cm}
	\texttt{\{jhaochen, \,tengyuma\}@cs.stanford.edu}
	\vspace{1cm}
\end{center}

\begin{abstract}%
Understanding self-supervised learning is important but challenging. Previous theoretical works study the role of pretraining losses, and view neural networks as general black boxes. However, the recent work of \citet{saunshi2022understanding}  argues that the model architecture --- a component largely ignored by previous works --- also has significant influences on the downstream performance of self-supervised learning. In this work, we provide the first theoretical analysis of self-supervised learning that incorporates the effect of inductive biases originating from the model class. In particular, we focus on contrastive learning --- a popular self-supervised learning method that is widely used in the vision domain. We show that when the model has limited capacity, contrastive representations would recover certain special clustering structures that are compatible with the model architecture, but ignore many other clustering structures in the data distribution. As a result, our theory can capture the more realistic setting where contrastive representations have much lower dimensionality than the number of clusters in the data distribution. We instantiate our theory on several synthetic data distributions, and provide empirical evidence to support the theory.
\end{abstract}

\section{Introduction}

Recent years have witnessed the effectiveness of pre-trained representations, which are learned on unlabeled data with self-supervised losses and then adapted to a wide range of downstream tasks~\citep{chen2020simple, chen2020big, he2020momentum, caron2020unsupervised, chen2020improved,gao2021simcse, su2021tacl,chen2020exploring, brown2020language, radford2019language}. However, understanding the empirical success of this emergent pre-training paradigm is still challenging. It requires novel mathematical frameworks and analyses beyond the classical statistical learning theory. The prevalent use of deep neural networks in self-supervised learning also adds to the mystery. 

Many theoretical works focus on isolating the roles of self-supervised losses, showing that they encourage the representations to capture certain structures of the unlabeled data that are helpful for downstream tasks~\citep{arora2019theoretical, haochen2021provable, haochen2022beyond, wei2021why, xie2021explanation,saunshi2020mathematical}.  However, these works oftentimes operate in the sufficient pre-training data (polynomial in the dimensionality) or even infinite pre-training data regime, and view the neural network as a \textit{black box}. The only relevant property of neural networks in these works is that they form a parameterized model class with finite complexity measure (e.g., Rademacher complexity).

Recently, ~\citet{saunshi2022understanding} argue that the pre-training loss is \textit{not} the only contributor to the performance of self-supervised learning, and that previous works which view neural networks as a black box cannot tell apart the differences in downstream performance between architectures (e.g., ResNet~\citep{he15deepresidual} vs vision transformers~\citep{dosovitskiy2020image}). Furthermore, self-supervised learning with an appropriate architecture can possibly work under more general conditions and/or with fewer pre-training data than predicted by these results on general architecture. Therefore, a more comprehensive and realistic theory needs to take into consideration the inductive biases of architecture.

This paper provides the first theoretical analyses of the inductive biases of \textit{nonlinear} architectures in self-supervised learning. Our theory follows the setup of the recent work by~\citet{haochen2021provable} on contrastive learning and can be seen as a refinement of their results by further characterizing the model architecture's impact on the learned representations.

We recall that \citet{haochen2021provable} show that contrastive learning, with sufficient data and a parameterized model class of finite complexity, is equivalent to spectral clustering on a so-called \emph{population positive-pair graph}, where nodes are augmented images and an edge between the nodes $x$ and $x'$ is weighted according to the probability of encountering $(x, x')$ as a positive pair.
They essentially assume that the positive-pair graph contains several major semantically-meaningful clusters, and prove that contrastive representations exhibit a corresponding clustering structure in the Euclidean space, that is, images with relatively small graph distance have nearby representations. 

Their results highly rely on the clustering property of the graph---the representation dimensionality and pre-training sample complexity both scale in the number of clusters. 
The important recent work of~\citet{saunshi2022understanding}, however, demonstrates with a synthetic setting that contrastive learning can provably work with linear model architectures even if the number of clusters is huge (e.g., exponential in the dimensionality). 
Beyond the simple synthetic example discussed in their paper, there has been no previous work that formally characterizes this effect in a general setting.

In this work, we develop a general theory that leverages the inductive bias to avoid the dependency on the potentially huge number of clusters: although there exists a large number of clusters in the positive-pair graph, the number of clusters \textit{implementable by the model} (which we call \emph{minimal implementable clusters}) could be much smaller, even exponentially. 
Figure~\ref{figure:1} shows an example where a linear function can only implement one clustering structure but not the other, despite both being valid clusters in the positive-pair graph. It’s possible that a minimal implementable cluster consists of multiple well-separated sub-clusters but none of these sub-clusters can be implemented by the model class.

\begin{figure}
	\centering
		\includegraphics[width=0.8\textwidth]{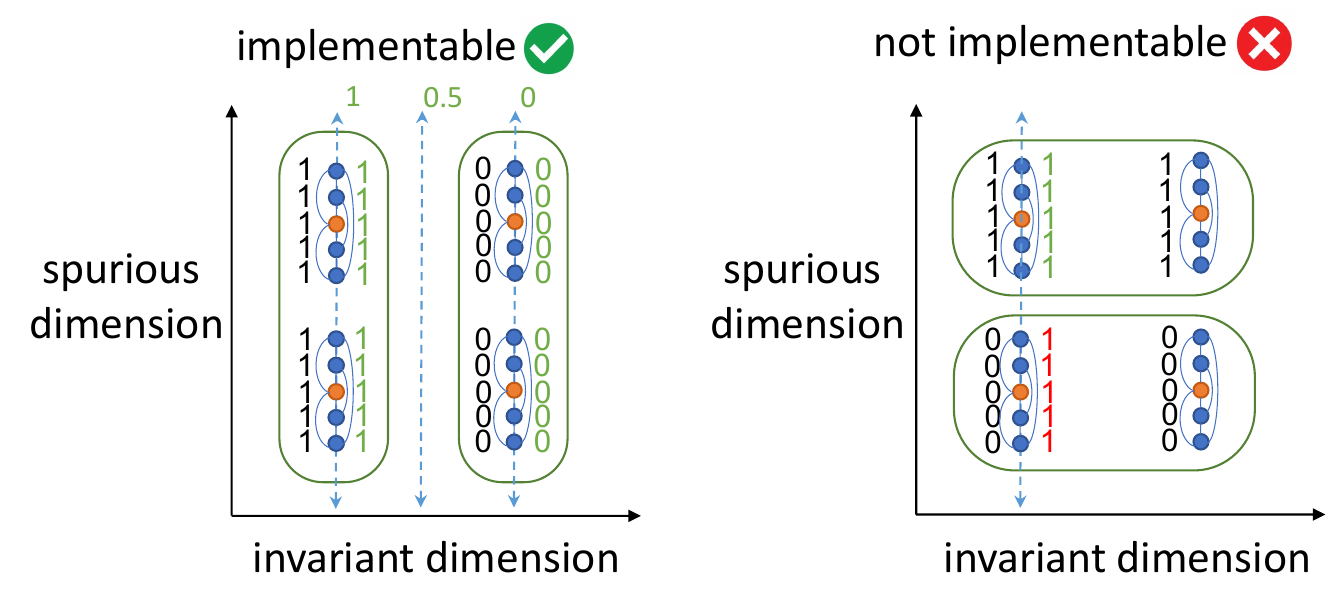}
	\caption{
		\textbf{A simple example where the linear function class learns the correct feature and ignores the spurious feature.} A simplified version of the synthetic example proposed in the work of \citet{saunshi2022understanding}. The orange points are the original data and blue points are augmented data (obtained by adding noise in the spurious dimension). 
The dimension invariant to augmentation is desired. Edges represent positive pairs that are constructed from augmentation. 
We say a real-valued function \emph{implements} a cluster if it outputs $1$ on the cluster and outputs $0$ on all other data.	
We note that here implementing means matching the \emph{exact} value, rather than simply matching the label after applying some linear threshold.
The figure above shows two possible ways to partition the data into two clusters, but only the one on the left-hand side (which captures the invariant dimension) is implementable by a linear function. Here we use black numbers to indicate the target output on the data, and green numbers to indicate the output of the implementing function which extrapolates outside of the data support. Note that linear model is \textit{not} composed with a threshold function.  The partition on the right hand side is not implementable because any linear model that outputs constant $1$ on the upper-left small cluster would also output $1$ on the bottom-left small cluster due to linear extrapolation. Here we use red numbers to indicate the output of the linear function that contradicts with the target.
	}\label{figure:1}	
\end{figure}

We show that contrastive representations would only recover the clustering structures that are compatible with the model class, hence low-dimensional contrastive learned representations would work well on the downstream tasks.
Concretely, suppose the number of minimal implementable clusters is $\ncluster$ which can be much smaller than the number of natural clusters in the graph $\nclustertrue$. \citet{haochen2021provable} prove the efficacy of contrastive learning assuming the representation dimensionality (hence also sample complexity) is larger than $\nclustertrue$.
We develop a new theory (Theorem~\ref{theorem:main_theorem_with_clustering}) that makes the representation dimensionality only depend on $\ncluster$ instead of $\nclustertrue$. 
We also extend this result to a more complex setting where we can deal with even more structured clusters, e.g., when there are $2^{s}$ clusters with certain geometric structures, but the representation dimensionality can scale with only $s$ instead of $2^s$.  See Theorem~\ref{theorem:thm_eigenspace} and its instantiation on Example~\ref{example1} for this result.

We instantiate our theory on several synthetic data distributions and show that contrastive learning with appropriate model architectures can reduce the representation dimensionality, allowing better sample complexity.  We consider a data distribution on a hypercube first proposed by~\citet{saunshi2022understanding} which contains a small subspace of features that are invariant to data augmentation and a large subspace of spurious features.
When the function class is linear, we show that the contrastive representations can solve downstream binary classification tasks if the downstream label only depends on one dimension of  invariant features (Theorem~\ref{theorem:example1}).
When the function class is ReLU networks (hence more expressive), we show that the contrastive representations can solve more diverse downstream classification problems where the label can depend on all invariant features (Theorem~\ref{theorem:example2}).  We also provide examples for Lipschitz-continuous function classes (Theorem~\ref{theorem:example3}) and convolutional neural networks (Theorem~\ref{theorem:example4}).

We provide experimental results to support our theory. We propose a method to test the number of implementable clusters of ResNet-18 on the CIFAR-10 dataset and show that there are indeed only a small number of implementable clusters under the model architecture constraint (Section~\ref{section:experiments}).

\section{Related works}
Contrastive learning learns representations from different views or augmentations of inputs~\citep{chen2020simclr, hjelm2018learning, wu2018unsupervised, tian2019contrastive, chen2021exploring, gao2021simcse, bachman2019learning, 
oord2018representation, ye2019unsupervised, henaff2020data, misra2020self, caron2020unsupervised, zbontar2021barlow, bardes2021vicreg, tian2020makes, robinson2021contrastive, dubois2022improving}. The learned representation can be used (either directly or after finetuning) to solve a wide range of downstream tasks with high accuracy.

The empirical success of contrastive learning has attracted a series of theoretical works that study the contrastive loss~\citep{arora2019theoretical, haochen2021provable, haochen2022beyond, tosh2020contrastive, tosh2021contrastive, lee2020predicting, wang2021chaos, nozawa2021understanding, ash2022investigating, tian2022deep}, most of which treat the model class as a black box except for the work of \citet{lee2020predicting} which studies the learned representation with linear models, and the works of \citet{tian2022deep} and \citet{wen2021toward} which study the training dynamics of contrastive learning for linear and 2-layer ReLU networks. 
Most related to our work is~\citet{saunshi2022understanding} which theoretically shows (on a linear toy example) that appropriate model classes help contrastive learning by reducing the sample complexity. We generalize their results to broader settings. 

Several theoretical works also study non-contrastive methods for self-supervised representation learning~\citep{wen2022mechanism, tian2021understanding, garrido2022duality, balestriero2022contrastive}. 
\citet{garrido2022duality} establish the duality between contrastive and non-contrastive methods. 
\citet{balestriero2022contrastive} provide a unified framework for contrastive and non-contrastive methods. 
There are also works theoretically studying self-supervised learning in other domains such as language modeling~\citep{wei2021why,xie2021explanation,saunshi2020mathematical}. 
\section{From clusters to minimal implementable clusters}\label{section:main_theory}
In this section, we introduce our main theoretical results regarding the role of inductive biases of architectures in contrastive learning. 
Recall that contrastive learning encourages two different views of the same input (also called a \emph{positive pair}) to have similar representations, while two random views of two different inputs (also called a \emph{negative pair}) have representations that are far from each other.
Formally, we use $\pdata$ to denote the distribution of a random view of random input, use  $\ppos$ to denote the distribution of a random positive pair, and $\data$ to denote the support of $\pdata$. For instance, $\data$ is the set of all augmentations of all images for visual representation learning.

Following the setup of \citet{haochen2022beyond}, for a representation map $f:\data\rightarrow\real^k$ where $k$ is the representation dimensionality, we learn the contrastive representation by minimizing the following generalized spectral contrastive loss:
\begin{align}
	\loss_{\lambda}(f) := \Exp_{(x, x^+)\sim \ppos}[\norm{f(x)-f(x^+)}_2^2] + \lambda \cdot R(f),
\end{align}
where $\lambda>0$ is a hyperparameter indicating the regularization strength, and the regularizer normalizes the representation covariance towards the identity matrix:
\begin{align}
	R(f) := \norm{\Exp_{x\sim\pdata}[f(x)f(x)^\top] - \identity}_F^2.
\end{align}
This loss is very similar to the popular Barlow Twins loss~\citep{zbontar2021barlow} and has been shown to empirically work well~\citep{haochen2021provable}. 
Theoretically, the prior work proposes the notion of \emph{positive-pair graph} with $\data$ being the vertex set and an edge between the nodes $x$ and $x'$ is weighted according to the probability of encountering $(x, x')$ as a positive pair (i.e., $\ppos(x, x')$).
This graph is defined on the \textbf{population} data, and intuitively captures the semantic relationship between different data --- when the positive pairs are formed by applying data augmentation to the same natural data, it is expected that datapoints in the same cluster in the positive-pair graph would have similar semantic meanings. Figure~\ref{figure:posgraph} gives a demonstration of the positive-pair graph. 

\begin{figure}
	\hspace*{0.4in}\includegraphics[width=0.8\textwidth]{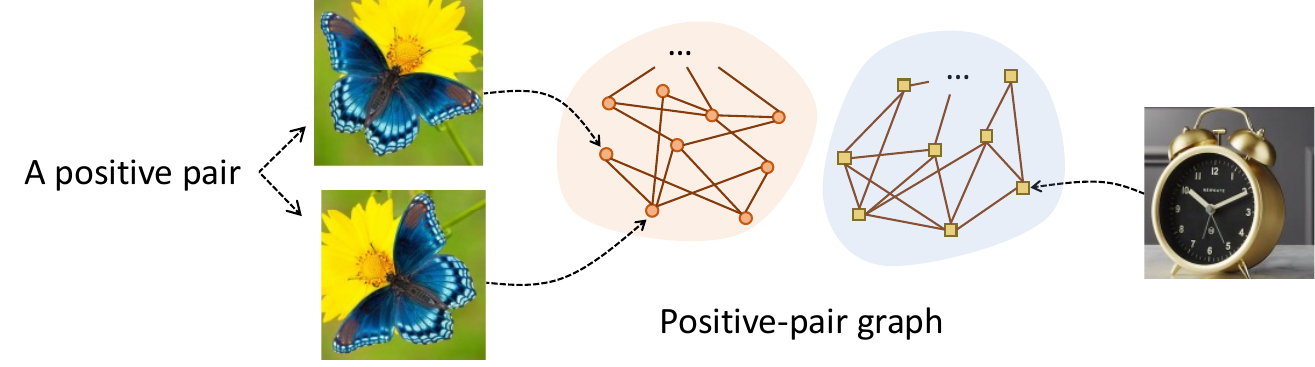}%
	\caption{
		\textbf{A demonstration of the positive-pair graph.} When the positive pairs are formed by applying data augmentation (such as rotation) to the same natural image, data with the same semantic meaning (e.g., the two butterfly images) tend to belong to the same cluster in the positive-pair graph. Datapoints with different semantic meanings (e.g., a butterfly image and a clock image) would not be connected in the positive-pair graph, hence belong to different clusters.	
		\label{figure:posgraph}
	}	
\end{figure}

Their analysis shows that learning contrastive representations with the above loss is equivalent to spectral clustering~\citep{ng2001spectral,shi2000normalized} on this positive-pair graph, hence can learn meaningful representations when the graph has clustering structures.

Different from the prior work, we study the representation map that minimizes the contrastive loss \emph{within a certain function class $\fclass$}. Here we assume functions in $\fclass$ map data in $\data$ to representations in $\real^k$ for some dimensionality $k$.  The main contribution of our result is the improvement of $k$ due to the consideration of this specific function class: by studying the representation learned within a constrained model class $\fclass$, we will show that the necessary representation dimensionality $k$ is much smaller than that required in the prior work. As a result, the sample complexity for the downstream labeled task would be improved compared to the prior work.

Let $\{S_1, S_2, \cdots, S_m\}$ be a $m$-way partition of $\data$, i.e., they are disjoint non-empty subsets of $\data$ such that $\data = \cup_{i\in[m]} S_i$.  For any $x\in\data$, let $\id_x$ be the index such that $x\in S_{\id_x}$. We consider a partition of the graph such that there is not much connection between any two clusters, which is formalized by the following assumption.
 
\begin{assumption}[$\alpha$-separability]\label{assumption:connectivity}
	The probability of a positive pair belonging to two different sets is less than $\alpha$:
	\begin{align}
		\Pr_{(x, x^+)\sim \ppos}(\id_x\ne \id_{x^+})\le \alpha.
	\end{align}
\end{assumption}

We consider downstream tasks that are $r$-way classification problems with label function $y(\cdot): \data\rightarrow[r]$. We assume that the downstream task aligns with the clusters:
\begin{assumption}\label{assumption:downstream}
	The downstream label $y(x)$ is a constant on each $S_i$.
\end{assumption}

Our key assumptions about the function class are that it can implement desirable clustering structures (Assumption~\ref{assumption:implementability}) but cannot break the positive-pair graph into too many clusters (Assumption~\ref{assumption:separability}).

Let $S\subset\data$ be a subset of $\data$, $\pdata^S$ be the distribution $\pdata$ restricted to set $S$, and $\ppos^S$ be the positive pair distribution $\ppos$ conditioned on both datapoints in the pair belonging to set $S$. For any function $g:S\rightarrow\real$, we define the following expansion quantity:
\begin{align}
	Q_S(g) := \frac{\Exp_{(x, x^+)\sim \ppos^S}[(g(x)-g(x^+))^2]}{\Exp_{x\sim\pdata^S, x'\sim\pdata^S}[(g(x)-g(x'))^2]}.\label{eqn:11}
\end{align}
We let $Q_S(g)=\infty$ if the denominator is $0$. 
Here the numerator represents the discrepancy between a random positive pair, and the denominator represents the global variance of $g$. 
Intuitively, a smaller value $Q_S(g)$ means that function $g$ does a better job at separating the set $S$ into disjoint sub-clusters, and hence implements an inner-cluster connection structure that is sparse. For instance, if $S$ contains two disjoint sub-clusters, and $g$ has different constant values on each of them, then $Q_S(g)=0$. On the other hand, if $S$ is densely connected, then $Q_S(g)>0$ regardless of the choice of $g$. We also note that $Q_S(\cdot)$ is also closely related to the sparsest cut formulation in spectral graph theory. When $g$ is restricted to output only values in $\{0,1\}$, then the RHS of~\eqref{eqn:11} is the sparsest cut value of the subgraph supported on the vertices in $S$ (cf. Definition 4 or equation 2.3 of~\citet{Trevisan2015NotesOE}), and $Q_S$ is also a typical way to relax the sparsest cut value (cf. Section 2.3 of~\citet{Trevisan2015NotesOE}). 

The first assumption about the function class $\fclass$ assumes that no function in the class can break one cluster into two well-separated sub-clusters:
\begin{assumption}[$\fclass$-implementable inner-cluster connection larger than $\beta$]\label{assumption:separability}
	For any function $f\in\fclass$ and any linear head $w\in \real^k$, let function $g(x) = w^\top f(x)$. For any $i\in[m]$ we have that:
	\begin{align}
		Q_{S_i}(g)\ge \beta.
	\end{align}
\end{assumption}

We note that when the function class $\fclass$ contains \emph{all} the functions from $\data$ to $\real^k$, Assumption~\ref{assumption:separability} essentially says that each of $\{S_1, S_2, \cdots, S_m\}$ has large internal expansion, hence recovers Assumption 3.5 in \citet{haochen2021provable}.
However, when $\fclass$ has limited capacity, each cluster $S_i$ can still contain well-separated sub-clusters, but just those sub-clusters cannot be implemented by functions in $\fclass$.

Assumption~\ref{assumption:separability}  implies that the function class cannot be too expressive. However, in order for the learned representation map to be useful for downstream tasks, it needs to be expressive enough to represent the useful information. Thus, we introduce the following assumption on the function class. 
\begin{assumption}[Implementability]\label{assumption:implementability}
	Recall that $\id_x$ is the index such that $x\in S_{\id_x}$.
	There exists a function $f\in\fclass$ such that $f(x)=e_{\id_x}$ for all $x\in\pdata$ where $e_i\in\real^m$ is the vector where the $i$-th dimension is $1$ and other dimensions are $0$. 
\end{assumption}

When both Assumption~\ref{assumption:separability} and Assumption~\ref{assumption:implementability} hold, we say $\{S_1, S_2, \cdots, S_m\}$ are \emph{minimal implementable clusters} with respect to $\fclass$.

We also introduce the following Assumption~\ref{assumption:closeness_under_scaling} which is true for any function class implemented by a neural network where the last layer is linear. We note that this assumption is needed only for the technical rigour of the proof, and is not essential to the conceptual message of our theory.
\begin{assumption}[Closure under scaling]\label{assumption:closeness_under_scaling}
	For any function $f\in\fclass$ and vector $u\in\real^m$, define function $f'(x) = u\odot f(x)$ where $\odot$ means element-wise product. Then, we have $f'\in\fclass$.
\end{assumption}

Let  $\pmin := \min_{i\in[m]}\Pr_{x\sim\pdata}(x\in S_i)$ and $\pmax := \max_{i\in[m]}\Pr_{x\sim\pdata}(x\in S_i)$ be the sizes of the smallest and largest sets respectively.
Under the above assumptions, we have the following theorem that shows learning a representation map within $\fclass$ and representation dimensionality $k=m$ can solve the downstream task:
\begin{theorem}\label{theorem:main_theorem_with_clustering}
	Suppose $\{S_1, S_2, \cdots, S_m\}$ are minimal implementable clusters with respect to $\fclass$ (i.e., Assumptions~\ref{assumption:connectivity} and~\ref{assumption:separability} hold), and the function class $\fclass$ satisfies Assumptions~\ref{assumption:implementability} and~\ref{assumption:closeness_under_scaling}. For $\lambda>\alpha/\pmin$, consider a learned representation map $\hat{f} = \argmin_{f \in F} \loss_\lambda(f)$ that minimizes the contrastive loss. 
	Then, when $k=m$, for any downstream task that satisfies Assumption~\ref{assumption:downstream}, there exists a linear head $W\in\real^{r\times k}$ which achieves downstream error 
	\begin{align}
		\Exp_{x\sim\pdata}\big[\big\|{W\hat{f}(x)-e_{y(x)}}\big\|_2^2\big]\le  \frac{\alpha}{\beta} \cdot \frac{\pmax}{\pmin-\alpha}.
	\end{align}
\end{theorem}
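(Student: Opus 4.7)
The plan is to show that the learned representation $\hat f$ is, up to a linear transformation, close to the canonical cluster-indicator embedding $f^\star(x) := p_{\id_x}^{-1/2}\,e_{\id_x}$, on which the downstream task is trivially linearly separable, and then inherit the linear head from that ideal embedding.

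I would start by verifying that $f^\star\in\fclass$, which uses Assumption~\ref{assumption:implementability} to produce the raw indicator $x\mapsto e_{\id_x}$ and Assumption~\ref{assumption:closeness_under_scaling} to rescale the $i$-th coordinate by $p_i^{-1/2}$. A direct computation then gives $\E_{\pdata}[f^\star(x)f^\star(x)^\top]=\sum_i p_i\cdot p_i^{-1}\,e_ie_i^\top=I$, so $R(f^\star)=0$; meanwhile $\|f^\star(x)-f^\star(x^+)\|_2^2$ is zero on within-cluster pairs and at most $p_{\id_x}^{-1}+p_{\id_{x^+}}^{-1}\le 2/\pmin$ across clusters, which by Assumption~\ref{assumption:connectivity} bounds the first term of $\loss_\lambda$ by $2\alpha/\pmin$. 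Optimality of $\hat f$ thus yields $\loss_\lambda(\hat f)\le 2\alpha/\pmin$.

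Next, I would apply Assumption~\ref{assumption:separability} coordinate by coordinate to convert this loss bound into a within-cluster variance bound. Writing $c_i:=\E_{x\sim\pdata^{S_i}}[\hat f(x)]$ and $\Var_i(\hat f):=\E_{x\sim\pdata^{S_i}}\|\hat f(x)-c_i\|_2^2$, taking $w=e_j$ in Assumption~\ref{assumption:separability} (justified by Assumption~\ref{assumption:closeness_under_scaling}) gives $Q_{S_i}(\hat f_j)\ge\beta$ for every $i,j$, so summing in $j$ yields $\E_{(x,x^+)\sim\ppos^{S_i}}\|\hat f(x)-\hat f(x^+)\|_2^2\ge 2\beta\,\Var_i(\hat f)$. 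Letting $w_i:=\Pr_{\ppos}(\id_x=\id_{x^+}=i)$, Assumption~\ref{assumption:connectivity} ensures $w_i\ge p_i-\alpha$. Dropping the non-negative cross-cluster contributions to the positive-pair term gives $\sum_i w_i\Var_i(\hat f)\le\alpha/(\beta\pmin)$, and the monotonicity $p_i/w_i\le\pmin/(\pmin-\alpha)$ converts this into the key estimate
\begin{align*}
\sum_i p_i\,\Var_i(\hat f)\;\le\;\frac{\alpha}{\beta(\pmin-\alpha)}.
\end{align*}

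Finally, I would construct the linear head. Defining the piecewise-constant approximation $\tilde f(x):=c_{\id_x}$, we have $\E_{\pdata}\|\hat f-\tilde f\|_2^2=\sum_i p_i\Var_i(\hat f)$. Combining the regularizer bound $R(\hat f)\le 2\alpha/(\lambda\pmin)$ (finite since $\lambda>\alpha/\pmin$) with the variance bound forces the scaled centroid matrix $C:=[\sqrt{p_1}c_1,\ldots,\sqrt{p_m}c_m]\in\real^{m\times m}$ to satisfy $CC^\top\approx I$, so (since $k=m$) $C$ is invertible and the $c_i$'s mirror the orthogonal basis $\{p_i^{-1/2}e_i\}$ of $f^\star$. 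I would then choose $W$ so that $Wc_i=e_{y(i)}$ for all $i$, in analogy with the ideal head $W^\star=\sum_i\sqrt{p_i}\,e_{y(i)}e_i^\top$ which satisfies $\|W^\star\|_{\mathrm{op}}^2\le\pmax$. Because $W$ is linear and $W\tilde f(x)=e_{y(x)}$,
\begin{align*}
\E_{\pdata}\|W\hat f(x)-e_{y(x)}\|_2^2=\E\|W(\hat f(x)-\tilde f(x))\|_2^2\le\|W\|_{\mathrm{op}}^2\sum_i p_i\Var_i(\hat f)\le\frac{\alpha}{\beta}\cdot\frac{\pmax}{\pmin-\alpha}.
\end{align*}

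The main obstacle is the last step: rigorously bounding $\|W\|_{\mathrm{op}}^2\le\pmax$ when the $c_i$'s are only approximately orthogonal. I expect this to require either a Procrustes-style alignment showing that $\hat f$ is close in $L^2(\pdata)$ to an orthogonal transformation of $f^\star$, so that one can take $W=W^\star U^\top$ for an aligning orthogonal $U$, or a direct variational construction of $W$ that absorbs the perturbation in the centroids.
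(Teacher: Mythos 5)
Your proposal follows the same overall route as the paper (compare $\hat f$ with the ideal indicator embedding $f^*(x)=P_{\id_x}^{-1/2}e_{\id_x}$, use optimality of $\hat f$ plus Assumption~\ref{assumption:separability} coordinate-wise to bound the within-cluster variation, then build a linear head), and your intermediate estimate $\sum_i p_i\Var_i(\hat f)\le\frac{\alpha}{\beta(\pmin-\alpha)}$ is essentially the same quantity the paper controls. But the step you yourself flag as the ``main obstacle'' is a genuine gap, and it is exactly where the paper's proof does its real work. The head you propose is the exact-interpolation map $Wc_i=e_{y(i)}$; its operator norm is governed by the smallest singular value of the scaled centroid matrix $C$, which is only \emph{approximately} $1$ (from $R(\hat f)\le 2\alpha/(\lambda\pmin)$ plus the variance bound), so $\norm{W}_{\mathrm{op}}^2\le\pmax$ is not justified — you would get $\pmax$ times a factor $(1+O(\sqrt{\alpha/(\lambda\pmin)}))$ or worse. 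Since your chain has already spent all of its slack (you start from the honest bound $2\alpha/\pmin$ on the positive-pair term and your variance estimate is exactly $\alpha/(\beta(\pmin-\alpha))$), any such perturbation factor overshoots the stated constant $\frac{\alpha}{\beta}\cdot\frac{\pmax}{\pmin-\alpha}$; a Procrustes alignment alone does not obviously save this either, because the centroids are perturbed both in directions and in lengths.

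The paper avoids the issue by never requiring the head to interpolate and never bounding its operator norm. It first whitens: with $M:=\Exp_{x\sim\pdata}[\hat f(x)\hat f(x)^\top]$ (invertible since $\lambda>\alpha/\pmin$), set $\tilde f=M^{-1/2}\hat f$, which has \emph{exactly} identity second moment, and note that subtracting the cluster centroids of $\tilde f$ is the same as the projection $\pif=\tilde f-Qf^*$ with $Q=\Exp[\tilde f f^{*\top}]$, because $f^*$ is constant on each cluster. The separability argument is then run on $\pif$ (this is where the paper's $(\pmin-\alpha)$ enters), giving $\Exp\norm{\pif}_2^2\le\frac{\alpha}{\beta(\pmin-\alpha)}$. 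The key device is Lemma~\ref{lemma:projection_other_direction}: for two features with identity covariance, the projection residual of $\tilde f$ off $f^*$ \emph{equals} the $L^2$ error of the regression head $U=\Exp[f^*\tilde f^\top]$ predicting $f^*$ from $\tilde f$. So the bound on $\Exp\norm{\pif}_2^2$ transfers directly to $\Exp\norm{f^*-UM^{-1/2}\hat f}_2^2$ with no conditioning argument, and the only place $\pmax$ appears is the fixed diagonal rescaling $\mathrm{diag}(\sqrt{P_1},\dots,\sqrt{P_m})$ that converts $f^*$ back to $e_{\id_x}$. If you want to complete your version, the cleanest fix is to adopt this whitening-plus-duality step rather than trying to control the norm of an interpolating head.
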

We note that $\pmax\approx\pmin$ when the partitions are balanced. Thus, so long as $\alpha\ll \pmin$ (i.e., the probability of a positive pair crossing different clusters is smaller than the probability of it containing data from the smallest cluster), the right-hand side is roughly ${\alpha}/{\beta}$. Thus, when the inter-cluster connection $\alpha$ is smaller than the inner-cluster connection that is implementable by the function class $\beta$, the downstream accuracy would be high.

\noindent\textbf{Comparison with \citet{haochen2021provable}.} We note that our result requires $k=m$, whereas \citet{haochen2021provable} provide analysis in a more general setting for arbitrary $k$ that is large enough. Thus, when the function class $\fclass$ is the set of all functions, our theorem recovers a special case of \citet{haochen2021provable}. Our result requires a stricter choice of $k$ mainly because when $\fclass$ has limited capacity, a higher dimensional feature may contain a lot of ``wrong features'' while omitting the ``right features'', which we discuss in more details in the next section.

\section{An eigenfunction viewpoint}\label{section:eigenspace}

In this section, we introduce an eigenfunction perspective that generalizes the theory in the previous section to more general settings. We first introduce the background on eigenfunctions and discuss their relation with contrastive learning. Then we develop a theory that incorporates the model architecture with assumptions stated using the language of eigenfunctions. The advantage over the previous section is that we can further reduce the required representation dimensionality when the minimal implementable clusters exhibit certain internal structures. 

Here we note that we use the language of eigenfunctions because the positive-pair graph can be infinite. Casual readers can think of the graph as a very large but finite graph and treat all the eigenfunctions as eigenvectors.

Our theory relies on the notion of \emph{Laplacian operator} $\laplacian$ of the positive-pair graph, which maps a function $g: \data\rightarrow\real$ to another function $\laplacian(g): \data\rightarrow\real$ defined as follows.
\begin{align}
	\laplacian(g)(x) := g(x) - \int\frac{\ppos(x, x')}{{\pdata(x)}} g(x')dx'.
\end{align}
We say a function $g$ is an eigenfunction of $\laplacian$ with eigenvalue $\psi\in\real$ if for some scalar $\psi$, 
\begin{align}
	\Exp_{x\sim\pdata}\left[(\psi\cdot g(x)- \laplacian(g)(x))^2\right]=0.
\end{align}
This essentially means that $L(g) = \psi \cdot g$ on the support of $\pdata$. 

\noindent\textbf{Eigenfunctions with small eigenvalues achieve small loss on the positive-pairs.} 
One important property is that when $g$ is an eigenfunction with small eigenvalue $\psi$, the quadratic form $\Exp_{(x, x^+)\sim \ppos}[(g(x) - g(x^+))^2]$ is also small, that is, there is a good match between the positive-pairs. In particular, when $\psi=0$, we have that $\laplacian(g)(x) = 0$ on the support of $\pdata$. Thus, 
\begin{align}
	\Exp_{(x, x^+)\sim \ppos}\left[(g(x) - g(x^+))^2\right] &= 2\Exp_{x\sim\pdata}\left[g(x)^2\right] - 2\Exp_{(x, x^+)\sim \ppos}\left[g(x)g(x^+)\right]\\
	&= 2\Exp_{x\sim\pdata}\left[g(x)^2\right] - 2\Exp_{x\sim\pdata}\left[g(x)\int \frac{\ppos(x, x')}{{\pdata(x)}} g(x')dx'\right]\\
	&= 2\Exp_{x\sim\pdata}\left[g(x)^2\right] - 2\Exp_{x\sim\pdata}\left[g(x)^2\right] = 0.
\end{align}

We can formalize this property of eigenfunctions with the following proposition.
\begin{proposition}\label{proposition:eigenvector}
	Any function $g:\data\rightarrow\real$ that satisfies $\Exp_{(x, x^+)\sim \ppos}\left[(g(x) - g(x^+))^2\right]=0$ is an eigenfunction of $\laplacian$ with eigenvalue $0$.
\end{proposition}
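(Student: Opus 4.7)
The plan is to show directly that $\laplacian(g)(x)=0$ for $\pdata$-almost every $x$, which by the definition of eigenfunction immediately gives that $g$ has eigenvalue $0$. The natural route is to bound $\Exp_{x\sim\pdata}[\laplacian(g)(x)^2]$ by the given quantity $\Exp_{(x,x^+)\sim\ppos}[(g(x)-g(x^+))^2]$, via a Jensen/Cauchy--Schwarz step, and then invoke the hypothesis.

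First I would rewrite $\laplacian(g)$ as a conditional expectation of a difference. Since $\int \ppos(x,x')\,dx' = \pdata(x)$, the ratio $\ppos(x,x')/\pdata(x)$ is a probability density in $x'$ for each fixed $x$, so
\begin{align}
\laplacian(g)(x) = \int \frac{\ppos(x,x')}{\pdata(x)}\bigl(g(x)-g(x')\bigr)\,dx'.
\end{align}
Thus $\laplacian(g)(x)$ is the conditional expectation of $g(x)-g(x^+)$ given the first coordinate $x$ of a positive pair.

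Next, applying Jensen's inequality (equivalently, Cauchy--Schwarz against the probability density $\ppos(x,x')/\pdata(x)$) gives
\begin{align}
\laplacian(g)(x)^2 \le \int \frac{\ppos(x,x')}{\pdata(x)}\bigl(g(x)-g(x')\bigr)^2\,dx'.
\end{align}
Taking expectation over $x\sim\pdata$, the $\pdata(x)$ in the outer measure cancels the $\pdata(x)$ in the denominator, yielding
\begin{align}
\Exp_{x\sim\pdata}\bigl[\laplacian(g)(x)^2\bigr] \le \iint \ppos(x,x')\bigl(g(x)-g(x')\bigr)^2\,dx\,dx' = \Exp_{(x,x^+)\sim\ppos}\bigl[(g(x)-g(x^+))^2\bigr].
\end{align}
By hypothesis the right-hand side is $0$, so $\laplacian(g)(x)=0$ for $\pdata$-almost every $x$. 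This means $\Exp_{x\sim\pdata}[(0\cdot g(x)-\laplacian(g)(x))^2]=0$, which is exactly the definition given in the paper that $g$ is an eigenfunction with eigenvalue $\psi=0$.

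There is no real obstacle here; the only subtlety is confirming that $\ppos(x,\cdot)/\pdata(x)$ integrates to $1$ (which follows from the definition of $\pdata$ as the marginal of $\ppos$) so that Jensen applies with a genuine probability measure. If one wanted to avoid the measure-theoretic wording, one can equivalently phrase the Jensen step as Cauchy--Schwarz: $\bigl(\int w(x')h(x')dx'\bigr)^2 \le \bigl(\int w(x')dx'\bigr)\bigl(\int w(x')h(x')^2dx'\bigr)$ with $w(x')=\ppos(x,x')/\pdata(x)$ and $h(x')=g(x)-g(x')$, using $\int w(x')dx' = 1$.
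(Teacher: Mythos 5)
Your proof is correct, and it takes a different route from the paper's. You bound the (random-walk) Laplacian pointwise: writing $\laplacian(g)(x)=\int\frac{\ppos(x,x')}{\pdata(x)}\bigl(g(x)-g(x')\bigr)\,dx'$ and applying Jensen/Cauchy--Schwarz against the conditional density $\ppos(x,\cdot)/\pdata(x)$ gives the quantitative inequality $\Exp_{x\sim\pdata}\bigl[\laplacian(g)(x)^2\bigr]\le\Exp_{(x,x^+)\sim\ppos}\bigl[(g(x)-g(x^+))^2\bigr]$, from which the claim is immediate. The paper instead symmetrizes: it passes to $\tilde{g}(x)=\sqrt{\pdata(x)}\,g(x)$ and the symmetric Laplacian $\tlaplacian$, observes that the quadratic form $\int\tilde{g}\,\tlaplacian(\tilde{g})$ equals (half of) the positive-pair discrepancy and hence vanishes, and then uses positive semi-definiteness of $\tlaplacian$ (i.e.\ $\langle v,\tlaplacian v\rangle=\|\tlaplacian^{1/2}v\|^2$) to conclude $\tlaplacian(\tilde{g})=0$, equivalently $\laplacian(g)=0$ on the support of $\pdata$. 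Your argument is more elementary and self-contained --- it does not require introducing the symmetric operator or asserting its PSD-ness, and it yields a robust inequality that would also control near-eigenfunctions when the positive-pair discrepancy is small but nonzero --- while the paper's argument sits naturally inside the spectral-graph-theory machinery (the symmetric Laplacian and its quadratic form) that the rest of the analysis uses. Both proofs rest on the same structural fact, which you correctly flag: $\pdata$ is the marginal of $\ppos$, so $\ppos(x,\cdot)/\pdata(x)$ is a genuine probability density for each $x$.
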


\noindent\textbf{Clusters can be represented by small eigenfunctions.}
Intuitively, small eigenfunctions (i.e., eigenfunctions with small eigenvalues) correspond to disconnected clusters in the positive-pair graph. 
To see this, let $S$ be a cluster that is disconnected from the rest of the positive-pair graph. Let $g$ implement the indicator function of $S$, i.e.,  $g(x) = 1$ if $x\in S$, and $g(x)=0$ if $x\notin S$. Since $S$ is disconnected from the rest of the graph, $\ppos(x, x')$ is non-zero only if $x$ and $x'$ both belong to $S$ or both are not in $S$. As a result, one can verify that $\laplacian(g)(x)=0$ for all $x$, thus $g$ is an eigenfunction with eigenvalue $0$. More generally, eigenfunctions with small but non-zero eigenvalues would correspond to clusters that are almost disconnected from the rest of the graph. This correspondence is well-known in the spectral graph theory literature~\citep{trevisan2017lecture}.

\noindent\textbf{Eigenfunctions can capture more information than clustering.} 
Although clusters correspond to eigenfunctions with small eigenvalues, eigenfunctions could be more coarse-grained than clusters---e.g., one eigenfunction could represent two clusters because the sum of indicator functions of two disconnected clusters is also an eigenfunction. Therefore, when the clusters have some geometric structures with each other, the eigenfunctions could help capture them. For instance, consider the situation where there are $4$ disconnected clusters centered at a 2 by 2 grid, e.g., $\{0, 1\}^2$. Since each cluster gives an eigenfunction, there is a subspace of 4 eigenfunctions. 
However, two of them are special---the two eigenfunctions that group the four clusters into two groups of two clusters along the axis---in the sense that they are linear, and therefore if the model family is restricted to be linear models, the linear inductive bias will prefer these two eigenfunctions over others. Figure~\ref{figure:lineareig} gives a demonstration of this example. We note that the cluster in the left figure of Figure~\ref{figure:lineareig} is not a minimal implementable cluster under our current definition in Section~\ref{section:main_theory}, since there exists a linear function within the function class that can separate it into two halves with zero connection. The eigenfunction viewpoint allows us to discuss this kind of structure.

\begin{figure}
	\centering
	\includegraphics[width=0.6\textwidth]{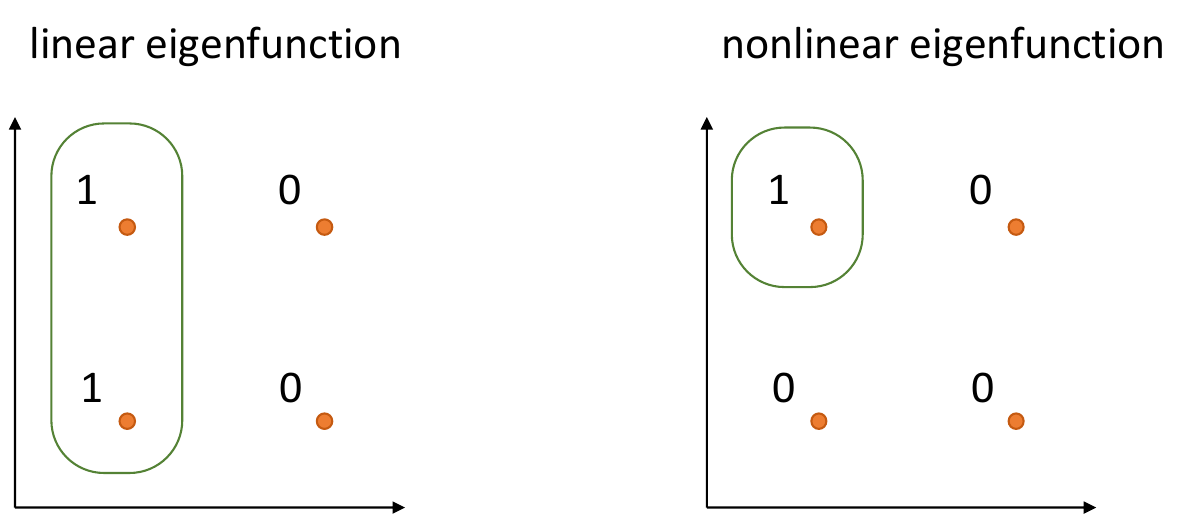}%
	\caption{
		\textbf{Eigenfunctions capture the geometric relationship between clusters.}  Consider four disconnected clusters/nodes in the 2-dimensional Euclidean space.	
		Since each cluster gives an eigenfunction, there is a subspace of 4 eigenfunctions that describes all the possible partitions. 
		However, two of them are special---the two eigenfunctions that group the four clusters into two groups of two clusters along the axis---in the sense that they are linear. Therefore, the eigenfunction viewpoint helps us distinguish these special groupings from the others: when constrained to be linear, an eigenfunction can represent the clustering in the left figure but not that in the right figure. We note that the cluster in the left figure is not a minimal implementable cluster under our current definition in Section~\ref{section:main_theory}, since there exists a linear function within the function class that can separate it into two halves with zero connection.
		\label{figure:lineareig}
	}	
\end{figure}

In this section, we provide a generalized theory based on characterizing the implementability of eigenfunctions. Intuitively, we will assume that there exist $m$ (and only $m$) orthogonal eigenfunctions in the function class with very small eigenvalue, and the downstream task can be solved by these eigenfunctions. More precisely, let $\phi\ge0$ be a very small real number (which can be thought of as $0$ for casual readers). We assume that there exist $m$ approximate eigenfunctions with small eigenvalues. 

\begin{assumption}\label{ass:5}
	Assume that there exist $m$ ``approximate eigenfunctions'', denoted by ${\feig}_i: \data\rightarrow\real, i\in[m]$, which belong to the function class $\fclass$ and satisfy the following conditions: 
	\begin{align}
		\textup{small quadratic form:} & \quad \sum_{i=1}^m \Exp_{(x, x^+)\sim\ppos} \big[\big\|{{\feig}_i(x)-{\feig}_i(x^+)}\big\|_2^2\big] \le \phi. \label{eqn:12} \\
		\textup{unit norm}: & \quad 	\Exp_{x\sim\pdata} \left[{\feig}_i(x)^2\right]=1, ~\forall i\in[m] \label{eqn:13} \\
		\textup{orthogonality}: & \quad \Exp_{x\sim\pdata}\left[{\feig}_i(x){\feig}_j(x)\right] = 0, \forall i\neq j \in [m]\label{eqn:14}
	\end{align}
	Let $\feig(x) = [{\feig}_1(x), {\feig}_2(x), \cdots, {\feig}_m(x)]^\top$ be the vector-output representation map that concatenates the above $m$ functions.
	We can summarize the above equations as for some real number $\phi \ge 0$, there exists $\feig: \data \rightarrow \real^m$ such that 
	\begin{align}
		& \Exp_{(x, x^+)\sim\ppos} \big[\big\|{\feig(x)-\feig(x^+)}\big\|_2^2\big] \le \phi \label{eqn:7}\\
		\textup{and }\quad 	&  \Exp_{x\sim\pdata}\big[\feig(x)\feig(x)^\top\big] = \identity. \label{eqn:8}
	\end{align}
	Moreover, we can define $\phi_m$ be the minimal value of $\phi$ such that there exists $\feig$ that satisfies~\eqref{eqn:7} and~\eqref{eqn:8}. 
\end{assumption}

Assumption~\ref{ass:5} can be viewed as a relaxation of Assumptions~\ref{assumption:connectivity} and~\ref{assumption:implementability} in Section~\ref{section:main_theory}. Reiterating the intuition mentioned in the paragraph with the header ``Clusters can be represented by small eigenfunctions'', we can intuitively demonstrate the existence of approximate eigenfunctions in some special cases. First, consider a graph with $m$ \textit{disconnected} clusters of equal size. Then, letting $\feig(x) = \sqrt{m} \cdot e_{i_x}$ where $i_x$ denotes the index of the cluster that $x$ belongs to, and $e_{i_x}\in \R^m$ is the one-hot vector for index $i_x$, we see that $\feig$ satisfies Assupmtion~\ref{ass:5} with $\phi =0$. In other words, the cluster identity function is a set of approximate eigenfunctions. Second, suppose the positive pair graph contains $m$ clusters with probability mass $p_1, p_2, \cdots, p_m$, then $\feig (x) = p_{i_x}^{-{1}/{2}}\cdot e_{i_x}$ satisfies the constraints with $\phi$ representing the total number of edges between clusters with some weighting that depends on the cluster size.

Next, we state an assumption analogous to Assumption~\ref{assumption:separability} in the previous section. Recall that Assumption~\ref{assumption:separability} intuitively says that even though a larger number of clusters exist in the positive-pair graph, many of them are not implementable by the function class. From the eigenfunction viewpoint, Assumption~\ref{assumption:separability} means that even though there could be many eigenfunctions with small eigenvalues, only a small number of them are in the function class $\fclass$. Concretely, we make the following corresponding assumption which says that the vector-valued function $\feig$ that was assumed to exist in Assumption~\ref{ass:5} spans all the implementable eigenfunctions with small eigenvalue.
\begin{assumption}\label{assumption:no_other_eigenvector_2}
	Let $\feig$ be the approximate eigenfunction in Assumption~\ref{ass:5}. Let $g$ be a real-valued function that is implementable by $\fclass$ (in the sense that $g(x) = f(x)_i$ for some $f\in\fclass$ and $i\in[k]$). We assume that any such $g$ which is an approximate eigenfunction with a small eigenvalue in the sense that 
	\begin{align}
		\Exp_{(x, x^+)\sim\ppos}[(g(x)  - g(x^+))^2] \le \tilde{\phi} \cdot \Exp_{x\sim\pdata}\left[g(x)^2\right] ,
	\end{align}
	must also be a linear combination of $\feig$, that is, there exists $\tilde{w}\in\real^m$ such that  
	\begin{align}
		\Exp_{x\sim\pdata}\left[(\tilde{w}^\top \feig(x) - g(x))^2\right] \le \epsilon.
	\end{align}
	Here both $\tilde{\phi}$ and $\epsilon$ are very small and can be thought of as $0$.
\end{assumption}

We consider downstream tasks that can be solved by $\feig$.
Let $\vec{y}(x)\in\real^r$ be a vector that represents the downstream label of data $x$ (e.g., the one-hot embedding of the label when the downstream task is classification).
We have the following assumption on the downstream task, which is analogous to Assumption~\ref{assumption:downstream} in Section~\ref{section:main_theory}:
\begin{assumption}\label{assumption:downstream_2}
	There exists a linear head $W^*\in \real^{r\times m}$  with norm $\norm{W^*}_F\le B$ such that 
	\begin{align}
		\Exp_{x\sim\pdata}\big[\big\|{W^*\feig(x)-\vec{y}(x)}\big\|_2^2\big]\le \zeta.
	\end{align}
	Here $\zeta$ can be thought of as a very small number.
\end{assumption}

We have the following theorem using the above two assumptions:
\begin{theorem}\label{theorem:thm_eigenspace}
	Under Assumption~\ref{ass:5}, \ref{assumption:no_other_eigenvector_2}, \ref{assumption:downstream_2} (in which $\phi, \tilde{\phi}, \eps, B, \zeta$ are defined), 	suppose $\tilde{\phi}>\phi$ or $\tilde{\phi}=\phi = 0$.
	Then, when $k=m$, for any $\lambda>0$ such that $\phi\le\tilde{\phi}(1-\sqrt{\phi/\lambda})$ and learned representation map $\hat{f} = \argmin_{f \in \fclass} \loss_\lambda(f)$, there exists a linear head $W\in\real^{r\times k}$ such that
	\begin{align}\label{eqn:9}
		\Exp_{x\sim\pdata}\big[\big\|{W\hat{f}(x)-\vec{y}(x)}\big\|_2^2\big]\lesssim  \zeta + B^2k\big(\epsilon+\frac{\phi}{\lambda}\big).
	\end{align}
\end{theorem}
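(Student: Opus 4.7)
The plan is to show that the minimizer $\hat{f}$ is approximately a near-orthogonal linear transform of $\feig$ in $L^2(\pdata)$, and then transport the downstream predictor $W^*$ guaranteed for $\feig$ by Assumption~\ref{assumption:downstream_2} across this transform. The first step is to evaluate the contrastive loss at $\feig\in\fclass$: by \eqref{eqn:7} the pairwise term is at most $\phi$, and by \eqref{eqn:8} the regularizer vanishes, so $\loss_\lambda(\feig)\le\phi$. Optimality of $\hat{f}$ then yields $\loss_\lambda(\hat{f})\le\phi$, which separately implies $\Exp_{(x,x^+)\sim\ppos}[\norm{\hat{f}(x)-\hat{f}(x^+)}_2^2]\le \phi$ and $\norm{\Exp_{x\sim\pdata}[\hat{f}\hat{f}^\top]-\identity}_F^2\le \phi/\lambda$. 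In particular, each coordinate $\hat{f}_i$ has small pairwise discrepancy and second moment at least $1-\sqrt{\phi/\lambda}$.

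The second step applies Assumption~\ref{assumption:no_other_eigenvector_2} to each coordinate. The hypothesis $\phi\le\tilde\phi(1-\sqrt{\phi/\lambda})$ (or the degenerate case $\tilde\phi=\phi=0$) is exactly what is needed to certify that the eigenvalue-like quotient $\Exp[(\hat{f}_i(x)-\hat{f}_i(x^+))^2]/\Exp[\hat{f}_i(x)^2]$ is at most $\tilde\phi$, so there exists $\tilde{w}_i\in\real^m$ with $\Exp[(\tilde{w}_i^\top\feig-\hat{f}_i)^2]\le\epsilon$. Stacking these $\tilde{w}_i$ as rows of a matrix $\tilde{W}\in\real^{k\times m}$ gives a single vector $L^2$ bound $\Exp[\norm{\tilde{W}\feig(x)-\hat{f}(x)}_2^2]\le k\epsilon$.

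The third step establishes that $\tilde{W}$ is approximately orthogonal, then assembles the predictor. Expanding $\Exp[\hat{f}\hat{f}^\top]$ as $\tilde{W}\tilde{W}^\top$ plus cross and quadratic terms in $\delta:=\hat{f}-\tilde{W}\feig$, and bounding the latter by Cauchy--Schwarz using $\Exp[\feig\feig^\top]=\identity$ and $\Exp[\norm{\delta}_2^2]\le k\epsilon$, yields $\norm{\tilde{W}\tilde{W}^\top-\identity}_{\mathrm{op}}=O(\sqrt{\phi/\lambda}+\sqrt{k\epsilon})$. Since $k=m$ makes $\tilde{W}$ square, the same bound holds for $\norm{\tilde{W}^\top\tilde{W}-\identity}_{\mathrm{op}}$, and $\norm{\tilde{W}}_{\mathrm{op}}=O(1)$. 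Taking $W:=W^*\tilde{W}^\top$, a triangle-inequality decomposition
\begin{align*}
W\hat{f}(x)-\vec{y}(x) = \big(W^*\feig(x)-\vec{y}(x)\big) + W^*\big(\tilde{W}^\top\tilde{W}-\identity\big)\feig(x) + W^*\tilde{W}^\top\big(\hat{f}(x)-\tilde{W}\feig(x)\big)
\end{align*}
bounds the three contributions, in expectation, by $\zeta$ (Assumption~\ref{assumption:downstream_2}), $O(B^2(\phi/\lambda+k\epsilon))$ (using $\norm{W^*}_F\le B$ and $\Exp[\norm{\feig}_2^2]=k$), and $O(B^2 k\epsilon)$ respectively, giving the claimed $\zeta+B^2 k(\epsilon+\phi/\lambda)$ bound up to absolute constants.

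The main obstacle is the third step: lifting a Frobenius-norm near-identity condition on $\Exp[\hat{f}\hat{f}^\top]$ to operator-norm control on both $\tilde{W}\tilde{W}^\top-\identity$ and $\tilde{W}^\top\tilde{W}-\identity$. The restriction $k=m$ is essential here --- it forces $\tilde{W}$ to be square, so its approximate right-inverse property automatically transfers to an approximate left-inverse, closing the transport argument. This is also where the constraint $\phi\le\tilde\phi(1-\sqrt{\phi/\lambda})$ must be consumed exactly, since any looser bound on $\Exp[\hat{f}_i^2]$ would prevent Assumption~\ref{assumption:no_other_eigenvector_2} from firing on every coordinate uniformly.
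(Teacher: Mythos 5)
Your proposal is correct, and its first two steps coincide with the paper's proof: evaluate $\loss_\lambda$ at $\feig$ to get $\loss_\lambda(\hat f)\le\phi$, extract the per-coordinate bounds $\Exp_{x\sim\pdata}[\hat f(x)_i^2]\ge 1-\sqrt{\phi/\lambda}$ and $\Exp_{(x,x^+)\sim\ppos}[(\hat f(x)_i-\hat f(x^+)_i)^2]\le\phi$, note that $\phi\le\tilde\phi(1-\sqrt{\phi/\lambda})$ lets Assumption~\ref{assumption:no_other_eigenvector_2} fire coordinate-wise, and stack to get $\tilde W$ with $\Exp[\|\tilde W\feig(x)-\hat f(x)\|_2^2]\le k\epsilon$. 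Where you diverge is the inversion step. The paper whitens $\hat f$ by $Q^{-1/2}$ with $Q=\Exp[\hat f\hat f^\top]$ and invokes Lemma~\ref{lemma:projection_other_direction}, an exact duality stating that when two $m$-dimensional maps both have identity covariance, the least-squares error of regressing one on the other is symmetric; this hands back a matrix $M$ with $\Exp[\|\feig(x)-MQ^{-1/2}\hat f(x)\|_2^2]$ small, and $W=W^*MQ^{-1/2}$ finishes by triangle inequality, with no perturbation analysis of $\tilde W$ needed (note the duality identity $m-\norm{U}_F^2$ on both sides is exactly where $k=m$ enters for the paper). You instead show $\tilde W$ is approximately orthogonal and transpose it: this works, and makes the role of $k=m$ very transparent (squareness turns the approximate right inverse into an approximate left inverse, since $\tilde W\tilde W^\top$ and $\tilde W^\top\tilde W$ share a spectrum), but it costs you two small pieces of bookkeeping that the duality route avoids. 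First, your cross-term bound in the expansion of $\Exp[\hat f\hat f^\top]$ already involves $\norm{\tilde W}_{\mathrm{op}}$, so you need a short self-bounding step (the quadratic inequality $\norm{\tilde W}_{\mathrm{op}}^2\le 1+\sqrt{\phi/\lambda}+2\norm{\tilde W}_{\mathrm{op}}\sqrt{k\epsilon}+k\epsilon$) to conclude $\norm{\tilde W}_{\mathrm{op}}=O(1)$, valid in the intended regime where $\phi/\lambda$ and $k\epsilon$ are $O(1)$ --- the same implicit smallness the paper uses when bounding $\norm{Q^{-1/2}-\identity}_{\mathrm{op}}$. Second, for the middle term you should bound $\Exp[\|W^*(\tilde W^\top\tilde W-\identity)\feig(x)\|_2^2]=\norm{W^*(\tilde W^\top\tilde W-\identity)}_F^2\le B^2\norm{\tilde W^\top\tilde W-\identity}_{\mathrm{op}}^2$ using $\Exp[\feig\feig^\top]=\identity$, as your stated $O(B^2(\phi/\lambda+k\epsilon))$ requires; the cruder route via $\Exp[\|\feig(x)\|_2^2]=k$ mentioned in your parenthetical would pick up an extra factor of $k$ on the $\epsilon$ term. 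With those two points spelled out, your argument delivers the stated bound.
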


Since $\zeta$, $\epsilon$ and $\phi$ are all very small values, the RHS of \eqref{eqn:9} is very small, hence the learned representation achieves small downstream error. 
When these quantities are exactly $0$, the proof is simple (or arguably trivial), because the learned representation would exactly correspond to the eigenfunctions in the function class with eigenvalue $0$. In this case, the contribution of the theorem is mainly the identification of the proper mathematical assumptions that relate the existence of approximate eigenfunctions with the performance of contrastive learning. 
In the more general case where $\zeta$, $\eps$, and $\phi$ are small but not exactly $0$, the main challenge of the proof comes from establishing the required regularization strength $\lambda$, which depends on both $\phi$ (the quadratic form of target approximate eigenfunctions) and $\tilde{\phi}$ (the quadratic form of approximate eigenfunctions in the function class that we want to cover), and deriving the error bound~\eqref{eqn:9}.
The proof of Theorem~\ref{theorem:thm_eigenspace} can be found in Section~\ref{appendix:proof_eigenfunction}. 

\noindent\textbf{Relationship between 
	Theorem~\ref{theorem:thm_eigenspace} and Theorem~\ref{theorem:main_theorem_with_clustering}}.
Theorem~\ref{theorem:thm_eigenspace} is a strictly stronger version of Theorem~\ref{theorem:main_theorem_with_clustering} in the previous section.  
In the setting of Theorem~\ref{theorem:main_theorem_with_clustering}, the identity function of each minimal implementable cluster would be an achievable eigenfunction. 	
Theorem~\ref{theorem:thm_eigenspace} considers a more general situation than Theorem~\ref{theorem:main_theorem_with_clustering} where the minimal implementable clusters may not be well-defined, yet still we can show good results when the dimensionality $k$ is equal to the number of achievable eigenfunctions $m$.
Indeed, as we will see in Example~\ref{example1} in the next section, smaller representation dimensionality is needed to satisfy the Assumption~\ref{ass:5} of Theorem~\ref{theorem:thm_eigenspace} than to satisfy Assumption~\ref{assumption:implementability} of Theorem~\ref{theorem:main_theorem_with_clustering} (that is, the number of minimal implementable clusters in the graph).
We will mainly use Theorem~\ref{theorem:thm_eigenspace} for the examples because it’s more general and easier to be used, whereas we present Theorem~\ref{theorem:main_theorem_with_clustering} because it’s more intuitive to understand. 

\subsection{An end-to-end result}\label{section:end-to-end}
In this section, we also provide an end-to-end result where both the representation function and the downstream linear head are learned from the empirical dataset, and the analysis requires additional considering the complexity of the family class. 

Given an empirical unlabeled pre-training dataset that contains $\npre$ positive pairs $\{(x_1, x_1^+), (x_2, x_2^+), \cdots, (x_\npre, x_\npre^+)\}$, we define the empirical contrastive loss:
\begin{align}
	\eloss_{\lambda}(f) := \frac{1}{\npre}\sum_{i=1}^{\npre}\left(\norm{f(x_i)-f(x_i^+)}_2^2\right) + \lambda \cdot \norm{\sum_{i=1}^\npre[f(x_i)f(x_i)^\top] - \identity}_F^2.
\end{align}

We introduce the following notion of Rademacher complexity of the function class $\fclass$.
\begin{definition}[Rademacher complexity]
	We define the Rademacher complexity as  
	\begin{align}
		\rad{n}(\fclass) = \underset{{x_1\sim\pdata,  \cdots, x_n\sim\pdata} \atop {\eps\sim\{-1, 1\}^n}}{\Exp} \left[\sup_{f\in\fclass}\sup_{i\in[k]} \big| \frac{1}{n} \sum_{j=1}^n \epsilon_j  f(x_j)_i  \big|\right].
	\end{align}
\end{definition}

We have the following theorem which gives an error bound when both the contrastive representation and the downstream linear head are learned from finite samples.
\begin{theorem}\label{theorem:thm_eigenspace_end_to_end}
	Let $C_f$ be the upper bound of the feature norm, i.e., $\norm{f(x)}_2\le C_f$ for all $f\in\fclass$ and $x\in\data$. 
	Suppose function $\feig\in\fclass$ satisfies Assumptions~\ref{assumption:no_other_eigenvector_2} with $(\tilde{\phi}, \epsilon)$ and Assumption~\ref{assumption:downstream_2} with $(B, \zeta)$. 
	Suppose $\tilde{\phi}>2\phi$.
	Set $k=m$ and $\lambda\ge2\tilde{\phi}+1$. Given a random sample of $\npre$ positive pairs from the pretraining distribution, we learn a representation map $\hat{f} = \argmin_{f \in \fclass} \eloss_\lambda(f)$. Then, given $\nds$ labeled data from the downstream task $\{(x_1, \vec{y}(x_1)), (x_2, \vec{y}(x_2)), \cdots, (x_\nds, \vec{y}(x_\nds))\}$, we learn a linear head $\widehat{W}\in\real^{r\times k}$ such that
	\begin{align}
		\widehat{W} := \argmin_{W \text{ s.t. }\norm{W}_F\le 4B} \frac{1}{\nds} \sum_{i=1}^\nds\big\|{W\hat{f}(x_i)-\vec{y}(x_i)}\big\|_2^2.
	\end{align}
	Then, with probability at least $1-\delta$, we have
	\begin{align}
		\Exp_{x\sim\pdata}\big[\big\|{\widehat{W}\hat{f}(x)-\vec{y}(x)}\big\|_2^2\big] \lesssim & \zeta + B^2k\left(\epsilon+\frac{{\phi}}{\lambda}\right) \nonumber\\ &+ \frac{C_f^4k^2\lambda + B^2k}{\tilde{\phi}-2\phi}\left(\rad{\npre}(\fclass) + \sqrt{\frac{\log(k^2/\delta)}{\npre}}\right)+ BC_f \sqrt{\frac{\log(1/\delta)}{\nds}}.
	\end{align}
\end{theorem}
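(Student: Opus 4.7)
The plan is to decompose the bound into three pieces corresponding to the three sources of error in the end-to-end pipeline: (i) uniform convergence of the empirical contrastive loss, which ensures that $\hat f$ is an approximate minimizer of the population loss $\loss_\lambda$; (ii) a robust version of Theorem~\ref{theorem:thm_eigenspace} that tolerates this approximate minimization and produces an existence-of-good-linear-head statement on the population distribution; and (iii) a standard Rademacher-based generalization argument for the downstream least-squares step. Composing the three bounds yields the stated inequality.

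\textbf{Uniform convergence of the contrastive loss.} First I would bound $\Delta:=\sup_{f\in\fclass}|\eloss_\lambda(f)-\loss_\lambda(f)|$. The alignment term $\Exp[\norm{f(x)-f(x^+)}_2^2]$ is a sum of $k$ coordinatewise squared differences, each bounded by $4C_f^2$ and $O(C_f)$-Lipschitz in the coordinate values, so Talagrand contraction together with the $L_\infty$-style definition of $\rad{\npre}(\fclass)$ contributes $O(kC_f^2(\rad{\npre}(\fclass)+\sqrt{\log(k/\delta)/\npre}))$. The regularizer $R(f)=\sum_{i,j}(\Exp[f(x)_i f(x)_j]-\identity_{ij})^2$ has $k^2$ terms; I would bound each inner product using the product decomposition $ab-a'b'=(a-a')b+a'(b-b')$ on a $[-C_f,C_f]$ coordinate range, then apply contraction with the $O(C_f^2)$-Lipschitz squaring on $[-C_f^2,C_f^2]$. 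Multiplying by $\lambda$ yields the $\lambda C_f^4 k^2$ prefactor, and a McDiarmid step with a union bound over the $k^2$ entries delivers the $\sqrt{\log(k^2/\delta)/\npre}$ concentration term. Since $\feig\in\fclass$ and equations~\eqref{eqn:7}--\eqref{eqn:8} give $\loss_\lambda(\feig)\le\phi$, optimality of $\hat f$ on the empirical loss yields $\loss_\lambda(\hat f)\le\phi+2\Delta$.

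\textbf{Robust eigenfunction argument.} The second and most delicate step is to re-derive Theorem~\ref{theorem:thm_eigenspace} with an approximate minimizer. I would revisit the original proof: it shows that whenever $\loss_\lambda(\hat f)$ is small and $\lambda\ge 2\tilde\phi+1$, each coordinate of $\hat f$ is, after orthogonalization, an approximate eigenfunction with eigenvalue at most $O(\loss_\lambda(\hat f)/\lambda)$, and hence by Assumption~\ref{assumption:no_other_eigenvector_2} lies close to the span of $\feig$. Substituting $\loss_\lambda(\hat f)\le \phi+2\Delta$ for the exact-minimizer value propagates $\Delta$ into every eigenvalue bound; the strengthened assumption $\tilde\phi>2\phi$ (rather than $\tilde\phi>\phi$ in Theorem~\ref{theorem:thm_eigenspace}) is precisely the slack needed so that all inequalities of the form $\tilde\phi-\phi-O(\Delta)>0$ remain valid, leaving a denominator of $\tilde\phi-2\phi$. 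The output of this step is a linear head $W^\star$ with $\norm{W^\star}_F\le 4B$ (hence feasible for the optimization defining $\widehat W$) satisfying
\begin{align*}
	\Exp_{x\sim\pdata}\big[\big\|W^\star\hat f(x)-\vec y(x)\big\|_2^2\big]\lesssim \zeta+B^2k\Big(\epsilon+\frac{\phi}{\lambda}\Big)+\frac{(C_f^4 k^2\lambda+B^2k)\,\Delta}{\tilde\phi-2\phi},
\end{align*}
where the $B^2k$ factor reflects the $k$-coordinate projection from $\hat f$ onto the eigenspace composed with the $B$-bounded target head from Assumption~\ref{assumption:downstream_2}.

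\textbf{Downstream generalization and composition.} With $\hat f$ fixed, the downstream squared loss $\|W\hat f(x)-\vec y(x)\|_2^2$ is $O(B^2C_f^2)$-bounded and $O(BC_f)$-Lipschitz in the linear output $W\hat f(x)$ over the ball $\norm W_F\le 4B$; the Rademacher complexity of the linear class $\{x\mapsto W\hat f(x):\norm W_F\le 4B\}$ is at most $O(BC_f/\sqrt{\nds})$ by the standard inner-product Rademacher bound, so Talagrand contraction and McDiarmid give $\sup_{\norm W_F\le 4B}|\Exp[\|W\hat f-\vec y\|_2^2]-(1/\nds)\sum_i\|W\hat f(x_i)-\vec y(x_i)\|_2^2|\lesssim BC_f\sqrt{\log(1/\delta)/\nds}$. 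Since $W^\star$ is feasible and $\widehat W$ is the empirical minimizer, a standard ERM chain gives $\Exp[\|\widehat W\hat f-\vec y\|_2^2]\le\Exp[\|W^\star\hat f-\vec y\|_2^2]+O(BC_f\sqrt{\log(1/\delta)/\nds})$, and substituting the bound from step (ii) produces the claimed inequality after a final union bound over the pretraining and downstream failure events. The hardest step is (ii): the eigenspace-projection argument of Theorem~\ref{theorem:thm_eigenspace} must be robustified so that the perturbation $\Delta$ propagates linearly rather than blowing up, which is precisely why the hypothesis is strengthened from $\tilde\phi>\phi$ to $\tilde\phi>2\phi$.
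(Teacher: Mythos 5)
Your steps (i) and (iii) match the paper's proof essentially verbatim (the paper splits the pretraining deviation into the alignment term and the two pieces of the expanded regularizer, bounds them by Rademacher complexity with contraction and a union bound over the $k^2$ covariance entries, and then uses a standard uniform-convergence plus ERM argument for the constrained head class $\{\norm{W}_F\le 4B\}$). The gap is in step (ii), which you present as an unconditional ``robust'' Theorem~\ref{theorem:thm_eigenspace} in which the perturbation $\Delta$ propagates linearly with denominator $\tilde{\phi}-2\phi$. That is not what the eigenfunction argument gives. To invoke Assumption~\ref{assumption:no_other_eigenvector_2} you need each coordinate of $\hat f$ to satisfy $\Exp_{\ppos}[(\hat f_i(x)-\hat f_i(x^+))^2]\le \tilde{\phi}\,\Exp_{\pdata}[\hat f_i(x)^2]$, which requires roughly $\frac{\loss_\lambda(\hat f)}{1-\sqrt{\loss_\lambda(\hat f)/\lambda}}\le\tilde{\phi}$, i.e.\ $\phi+2\Delta\lesssim\tilde{\phi}$, i.e.\ $\Delta\lesssim\tilde{\phi}-2\phi$. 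The hypothesis $\tilde{\phi}>2\phi$ does \emph{not} guarantee this: $\Delta$ is a sampling quantity that can exceed $\tilde{\phi}-2\phi$ for small $\npre$ or a rich $\fclass$, and in that regime Assumption~\ref{assumption:no_other_eigenvector_2} cannot be applied at all, so no head $W^\star$ with the claimed loss is produced by the eigenspace-projection argument, and your intermediate bound in step (ii) is unproved as stated.

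The paper closes this by an explicit case split, which is the missing idea you need. In the good case ($\Delta\lesssim\tilde{\phi}-2\phi$), the degradation from the approximate minimization enters only as $B^2k\,\Delta/\lambda$ (not $\Delta/(\tilde{\phi}-2\phi)$), and is then absorbed into the stated bound using $\lambda\ge 2\tilde{\phi}+1>\tilde{\phi}-2\phi$. In the bad case ($\Delta\gtrsim\tilde{\phi}-2\phi$), the stated bound is shown to hold vacuously: the feasible head $W=0$ has population loss $\Exp\norm{\vec{y}(x)}_2^2= 1$, so the ERM head combined with the downstream generalization term has loss $\lesssim 1+BC_f\sqrt{\log(1/\delta)/\nds}$, while the term $\frac{C_f^4k^2\lambda}{\tilde{\phi}-2\phi}\big(\rad{\npre}(\fclass)+\sqrt{\log(k^2/\delta)/\npre}\big)\gtrsim\frac{\Delta}{\tilde{\phi}-2\phi}\gtrsim 1$ (here one uses that $C_f^2\ge k\ge 1$, since $\Exp[\feig\feig^\top]=\identity$ forces $\Exp\norm{\feig(x)}_2^2=k$), so the right-hand side dominates. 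Thus the $\tilde{\phi}-2\phi$ denominator arises from this fallback-plus-absorption bookkeeping, not from a linear propagation of $\Delta$ inside the eigenfunction argument; without the case split and the trivial-head observation your composition in the final step does not go through. A further small point: the feasibility claim $\norm{W^\star}_F\le 4B$ should be verified, as in the paper, by bounding $\norm{\Exp[\hat f(x)\hat f(x)^\top]^{-1}}_2\le 2$ and $\norm{\Exp[\feig(x)\hat f(x)^\top]}_2\le 2$ using $\lambda\ge 2\tilde{\phi}+1$.
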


We note that the unsupervised sample complexity depends on the Rademacher complexity of the function class, whereas the supervised sample complexity doesn't.  The term $BC_f$ intuitively captures the complexity of the learned representation and usually increases as the feature dimensionality increases.
The fact that the downstream sample complexity scales with $BC_f$ demonstrates the benefit of learning a lower dimensional feature leveraging the inductive bias of the function class. The proof of Theorem~\ref{theorem:thm_eigenspace_end_to_end} is in Section~\ref{appendix:proof_end_to_end}.

\section{Instantiations on several synthetic data distributions}\label{section:examples}
In this section, we instantiate our previous theory on several examples of data distributions and show that when the model class has limited capacity, one can learn low-dimensional representations using contrastive learning and solve the downstream task with simple linear probing. In all of these examples, if we use a much more expressive model class, the representation dimensionality needs to be much higher, and hence more downstream samples are needed. These results demonstrate the benefit of leveraging inductive biases of the model architecture in contrastive learning.

\subsection{Linear functions}\label{section:example1}
Our first example is the hypercube example proposed in the work of \citet{saunshi2022understanding}. 
\begin{example}\label{example1}
	The natural data $\bar{x}\sim \{-1, 1\}^d$ is the uniform distribution over the $d$-dimensional cube. Given a natural data $\bar{x}$, an augmented data $x\sim\aug(\bar{x})$ is sampled as follows: first uniformly sample $d-s$ independent scalars $\tau_{s+1}\sim[\frac{1}{2}, 1], \cdots, \tau_{d}\sim[\frac{1}{2}, 1]$, then scale the $(s+1)$-th to $d$-th dimensions of $\bar{x}$ with $\tau_{s+1}, \cdots, \tau_{d}$ respectively, while keeping the first $s$ dimensions the same. 
	Intuitively, the last $d-s$ dimensions correspond to spurious features that can be changed by data augmentation, and the first $s$ dimensions are invariance features that contain information about the downstream task.
	The downstream task is a binary classification problem, where the label $y({x})=\sgn({x}_i)$ is the sign function of one of the first $s$ dimensions $i\in[s]$. 
\end{example}

We consider contrastive learning with the linear function class defined below:
\begin{definition}[Linear function class]
	Let $U\in\real^{k\times d}$ be a matrix and we use $f_{U}(x)=Ux$ to denote the linear function with weight matrix $U$. We define the $k$-dimensional linear function class as $\linearf = \{f_U: U\in\real^{k\times d}\}$.
\end{definition}

\citet{saunshi2022understanding} directly compute the learned representations from contrastive learning. 
We can instantiate Theorem~\ref{theorem:thm_eigenspace} to this example and get the following guarantees:
\begin{theorem}\label{theorem:example1}
	In Example~\ref{example1}, suppose we set the output dimensionality as $k=s$ and learn a linear representation map that minimizes the contrastive loss $\hat{f} = \argmin_{f\in\linearf} \loss_\lambda(f)$ for any $\lambda>0$. Then, there exists a linear head $w\in\real^{k}$ with zero downstream error, that is,
	\begin{align}
		\Exp_{x\sim\pdata} [(w^\top \hat{f}(x) - y(x))^2] = 0.
	\end{align}
	 In contrast, suppose the function class is the set of universal function approximators $\unif$. So long as the output dimensionality is no more than $2^{d-1}$, there exists solution $\hat{f}'\in\argmin_{f\in\unif} \loss_\lambda(f)$ such that any linear head $w\in\real^k$ has bad test error, that is
	\begin{align}
		\Exp_{x\sim\pdata}[(w^\top \hat{f}'(x) - y(x))^2] \ge 1.
	\end{align}
\end{theorem}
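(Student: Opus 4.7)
The plan is to invoke Theorem~\ref{theorem:thm_eigenspace} for the linear-class direction and to construct a bad minimizer by hand for the universal-class direction. For the linear case, I take $\feig_i(x) = x_i$ for $i \in [s]$ as the $m = s$ approximate eigenfunctions. Because the augmentation preserves the first $s$ coordinates, every positive pair satisfies $\feig_i(x) = \feig_i(x^+) = \bar x_i \in \{-1,+1\}$, so $\phi = 0$ in the pair-invariance condition; the unit-norm and orthogonality conditions follow from the independence of the $\pm 1$ coordinates of $\bar x$, and $\feig \in \linearf$ is immediate. To check Assumption~\ref{assumption:no_other_eigenvector_2} with $\tilde\phi = \phi = 0$ and $\epsilon = 0$, I would note that any implementable scalar has the form $g(x) = u^\top x$, and a short computation using independence of the augmentation scalings $\tau_j$ and $\bar x_j^2 = 1$ yields
\begin{align*}
\Exp_{(x, x^+)\sim\ppos}\left[(g(x) - g(x^+))^2\right] \,=\, 2\Var(\tau) \sum_{j > s} u_j^2,
\end{align*}
where $\Var(\tau)$ is the variance of a single $\mathrm{Unif}[1/2,1]$ scaling. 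Therefore $\Exp[(g-g^+)^2] = 0$ forces $u_j = 0$ for all $j > s$, so $g(x) = \sum_{j \le s} u_j x_j$ exactly, giving $\epsilon = 0$. Assumption~\ref{assumption:downstream_2} holds with $W^* = e_i^\top$, $B = 1$, $\zeta = 0$ because $y(x) = \sgn(x_i) = \bar x_i = x_i = e_i^\top \feig(x)$. Plugging into Theorem~\ref{theorem:thm_eigenspace} with $\zeta = \epsilon = \phi = 0$ produces a linear head achieving zero downstream squared error.

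For the $\unif$ direction, the key structural fact is that the augmentation $x_j = \tau_j \bar x_j$ with $\tau_j \in [1/2, 1]$ keeps $x_j$ on the sign-determined half-line $\sgn(\bar x_j)\cdot[1/2, 1]$, so the support of $\pdata$ decomposes into $2^d$ pairwise disjoint cells indexed by $\bar x \in \{-1,+1\}^d$, and every positive pair lies inside a single cell. Hence the Laplacian has a $2^d$-dimensional zero-eigenspace spanned by the $\pdata$-orthonormal indicators $v_j(x) = 2^{d/2}\mathbbm{1}\{\bar x = c^{(j)}\}$, where $c^{(1)}, \ldots, c^{(2^d)}$ enumerates $\{-1,+1\}^d$. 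Any map $\hat f'(x) = Q\, v(x)$ with $Q \in \real^{k \times 2^d}$ and $QQ^\top = \identity_k$ attains $\loss_\lambda(\hat f') = 0$ and therefore lies in $\argmin_{f\in\unif} \loss_\lambda(f)$. To spoil linear readout I would expand $y(x) = \bar x_i = \alpha^\top v(x)$ with Fourier coefficients $\alpha_j = c^{(j)}_i / 2^{d/2}$, giving $\|\alpha\|_2 = 1$, and invoke Parseval:
\begin{align*}
\Exp_{x\sim\pdata}\left[(w^\top \hat f'(x) - y(x))^2\right] \,=\, \big\lVert Q^\top w - \alpha\big\rVert_2^2 \,\ge\, \|\alpha\|_2^2 - \|Q\alpha\|_2^2.
\end{align*}
Since $\dim \alpha^\perp = 2^d - 1 \ge 2^{d-1} \ge k$, I can pick orthonormal rows of $Q$ entirely inside $\alpha^\perp$, which forces $Q\alpha = 0$ and yields downstream error at least $1$ uniformly in $w$.

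The main obstacle is the verification behind Assumption~\ref{assumption:no_other_eigenvector_2}: one must confirm that the Rayleigh-type quotient $\Exp[(g - g^+)^2]/\Exp[g^2]$ for a linear $g = u^\top x$ is strictly bounded away from zero on the subspace of weight vectors with nonzero component in the spurious coordinates $\{s{+}1, \ldots, d\}$, and this is exactly what the displayed identity above accomplishes. Everything else is either a direct normalization check or a standard Parseval expansion, and the universal-class counterexample reduces to the elementary linear-algebraic fact that we can choose $Q$ orthogonal to a single explicit direction in the $2^d$-dimensional zero-eigenspace, which is feasible precisely because $k \le 2^{d-1}$.
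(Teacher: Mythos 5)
Your linear-class argument is essentially the paper's own proof: the paper also takes $\feig = f_{\hat U}$ with $\hat U = [e_1,\dots,e_s]^\top$, verifies the pair-invariance and whitening conditions, argues that zero positive-pair discrepancy forces the weight vector of any implementable scalar to vanish on coordinates $s{+}1,\dots,d$ (your $2\Var(\tau)\sum_{j>s}u_j^2$ identity is just a quantitative version of this), and invokes Theorem~\ref{theorem:thm_eigenspace} with $\phi=\tilde\phi=\epsilon=\zeta=0$, $B=1$. That direction is correct and matches the paper.

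For the $\unif$ lower bound you take a genuinely different route. The paper fixes $y(x)=\sgn(x_1)$ and constructs a specific loss-zero minimizer whose coordinates are (scaled) indicators of the sign pattern of $x_{2:d}$; since this feature is independent of $x_1$ and $\Exp[y]=0$, every linear head incurs error at least $1$. You instead work in the full $2^d$-dimensional zero-eigenspace spanned by the cell indicators $v_j$, observe that any $\hat f' = Qv$ with orthonormal rows is a global minimizer, expand $y=\alpha^\top v$ with $\|\alpha\|_2=1$, and choose the rows of $Q$ inside $\alpha^\perp$ so that Parseval gives error at least $1$ for every head. Both are valid; your construction is arguably cleaner on the normalization side (the paper's ``binary number equals $j$'' feature only has identity covariance after implicitly grouping the $2^{d-1}$ sign patterns into $k$ equiprobable blocks, which your orthonormal-$Q$ choice handles automatically), and it in fact works for any $k\le 2^d-1$ rather than just $k\le 2^{d-1}$, while the paper's argument is more ``semantic'' in that the bad minimizer simply ignores the label coordinate and the bound follows from independence. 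No gaps; the proposal is correct.
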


The proof of Theorem~\ref{theorem:example1} is in Section~\ref{appendix:example1}.

We note that as an implication of the lower bound, previous works that analyze universal function approximators~\citep{arora2019theoretical, tosh2021contrastive, haochen2021provable} wouldn't be able to show good downstream accuracy unless the representation dimensionality is larger than $2^{d-1}$. In contrast, our theory that incorporates the inductive biases of the function class manages to show that a much lower representation dimensionality $k=s$ suffices.

We also note that this example shows a situation where Theorem~\ref{theorem:thm_eigenspace} works but Theorem~\ref{theorem:main_theorem_with_clustering} doesn't, hence demonstrating how our theory derived from the eigenfunction viewpoint allows for lower representation dimensionality. 
There are $2^s$ model-restricted minimal clusters in the graph, each encoded by one configuration of the $s$ feature dimensions. Thus, applying Theorem~\ref{theorem:main_theorem_with_clustering} gives $k=2^s$ (which is worse than $k=s$ but better than $k=2^d$).
However, all the functions in $\linearf$ that implement some clusters span a $s$-dimensional subspace, thus we can find $s$ eigenfunctions that satisfy Assumption~\ref{assumption:no_other_eigenvector_2}.
As a result, learning $s$-dimensional representations already suffices for solving the downstream task. 

\subsection{ReLU networks}
In the previous example, the downstream task is only binary classification where the label is defined by one invariant feature dimension. 
Here we show that when we use a ReLU network as the model architecture, the linear probing can solve more diverse downstream tasks where the label can depend on the invariant feature dimensions arbitrarily.

\begin{example}\label{example2}
	The natural data distribution and the data augmentation are defined in the same way as Example~\ref{example1}. The downstream task is a $m$-way classification problem such that the label function  $y(\cdot):\data\rightarrow[m]$ satisfies $y({x}) = y({x}')$ if ${x}_{1:s} = {x}'_{1:s}$. In other words, the label only depends on the first $s$ dimensions of the data.
\end{example}
\begin{definition}[ReLU networks]
	Let $U\in\real^{k\times d}$ and $b\in\real^k$, we use $f_{U, b} = \sigma(Wx+b)$ to denote the ReLU network with weight $U$ and bias $b$, where $\sigma$ is the element-wise ReLU activation. We define the $k$-dimensional ReLU network function class as $\reluf = \{f_{U, b}: U\in\real^{k\times d}, b\in\real^k\}$.
\end{definition}

We have the following theorem which shows the effectiveness of the ReLU network architecture.
\begin{theorem}\label{theorem:example2}
	In Example~\ref{example2}, suppose we set the output dimensionality $k=2^s$ and learn a ReLU network representation map $\hat{f} = \argmin_{f\in\reluf} \loss_\lambda(f)$ for some $\lambda>0$. Then, we can find a linear head $W\in\real^{r\times k}$ with zero downstream error, that is,
	\begin{align}\label{eqn:6}
		\Exp_{x\sim\pdata}\big[\big\|W\hat{f}(x) - e_{y(x)}\big\|_2^2\big]=0.
	\end{align}
	
	In contrast, suppose the function class is the set of universal function approximators $\unif$. So long as the output dimensionality is no more than $2^{d-s}$, there exists solution $\hat{f}'\in\argmin_{f\in\unif} \loss_\lambda(f)$ such that any linear head $W\in\real^{m\times k}$ has bad test error, that is, 
	\begin{align}
		\Exp_{x\sim\pdata}\big[\big\|{W \hat{f}'(x) - e_{y(x)}}\big\|_2^2\big] \ge \frac{1}{2}.
	\end{align}
\end{theorem}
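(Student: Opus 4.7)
The plan is to handle the upper bound via Theorem~\ref{theorem:thm_eigenspace} applied to an explicit family of $2^s$ ReLU-realizable eigenfunctions, and to handle the lower bound by exhibiting a minimizer of $\loss_\lambda$ in $\unif$ that encodes only spurious information independent of the label. The main technical content is verifying Assumption~\ref{assumption:no_other_eigenvector_2}, which requires showing that no ReLU neuron can be an exact eigenfunction of the positive-pair Laplacian unless its weights on the spurious coordinates vanish.

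For the upper bound, index outputs by $b\in\{-1,1\}^s$ and set $\feig_b(x) = 2^{s/2}\mathbf{1}[x_{1:s}=b]$. Each $\feig_b$ is (up to a constant) a single ReLU neuron $\sigma(b^\top x_{1:s} - (s-1))$, since $b^\top x_{1:s}$ equals $s$ when $x_{1:s}=b$ and is at most $s-2$ otherwise; concatenating the $2^s$ neurons realizes $\feig\in\reluf$ with $k=2^s$. The augmentation in Example~\ref{example2} leaves the first $s$ coordinates untouched, so every positive pair satisfies $\feig(x)=\feig(x^+)$ and \eqref{eqn:7} holds with $\phi=0$. Because the indicators have disjoint supports and $\bar{x}_{1:s}$ is uniform on $\{-1,1\}^s$, \eqref{eqn:8} holds with identity covariance. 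Since $y$ depends only on $x_{1:s}$, the matrix $W^*$ with column $b$ equal to $2^{-s/2}e_{y(b)}$ achieves $W^*\feig(x)=e_{y(x)}$ pointwise, giving Assumption~\ref{assumption:downstream_2} with $\zeta=0$ and $B=1$.

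The hard step is Assumption~\ref{assumption:no_other_eigenvector_2} with $\tilde\phi=\phi=\epsilon=0$: any ReLU neuron $g(x)=\sigma(w^\top x + c)$ with $\Exp_{(x,x^+)\sim\ppos}[(g(x)-g(x^+))^2]=0$ must lie in $\mathrm{span}(\feig)$. The augmentation orbits $O_{\bar{x}}=\{\bar{x}_{1:s}\}\times\prod_{i>s}\sgn(\bar{x}_i)[1/2,1]$ indexed by $\bar{x}\in\{-1,1\}^d$ are the connected components of the positive-pair graph, and the zero-variance condition forces $g$ to be constant on each orbit. On such an orbit, the preactivation $w^\top x+c$ is an affine function of the continuous coordinates $x_{s+1:d}$, so $\sigma(w^\top x+c)$ can be constant only if either $w_i=0$ for all $i>s$, or the preactivation is nonpositive throughout the orbit. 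In the second case $g\equiv 0$ on that orbit; if this happens on every orbit then $g\equiv 0$, and otherwise any orbit on which $g>0$ is constant would force $w_{s+1:d}=0$, a contradiction. Hence $w_{s+1:d}=0$ and $g$ depends only on the discrete coordinates $x_{1:s}$, which takes $2^s$ values, so $g\in\mathrm{span}(\{\feig_b\})$. Applying Theorem~\ref{theorem:thm_eigenspace} with these parameters and any $\lambda>0$ then delivers \eqref{eqn:6}.

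For the lower bound, fix a nontrivial label function such as $y(x)=\mathrm{index}(x_{1:s})$ with $m\ge 2$, pick any subset $\mathcal{C}\subset\{-1,1\}^{d-s}$ of size $k\le 2^{d-s}$, and define $\hat{f}'$ by $\hat{f}'_c(x)=2^{(d-s)/2}\mathbf{1}[\sgn(x_{s+1:d})=c]$ for $c\in\mathcal{C}$. Each coordinate is constant on every augmentation orbit, so $\Exp_{(x,x^+)\sim\ppos}\norm{\hat{f}'(x)-\hat{f}'(x^+)}_2^2=0$; disjoint supports and uniformity of $\bar{x}_{s+1:d}$ give $\Exp_{\pdata}[\hat{f}'\hat{f}'^\top]=\identity$, so both terms of $\loss_\lambda$ vanish and $\hat{f}'$ is a global minimizer in $\unif$. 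Under $\pdata$, $\sgn(x_{s+1:d})=\bar{x}_{s+1:d}$ is independent of $\bar{x}_{1:s}$ and hence of $y(x)$, so for any linear head $W$ the optimal prediction conditional on $\hat{f}'(x)$ is the marginal mean $\Exp[e_{y(x)}]=\frac{1}{m}\mathbf{1}$, yielding residual $1-\tfrac{1}{m}\ge\tfrac{1}{2}$. The main obstacle, as flagged above, is the orbit-based case analysis for Assumption~\ref{assumption:no_other_eigenvector_2}; everything else reduces to bookkeeping once the eigenfunctions and adversarial construction are in hand.
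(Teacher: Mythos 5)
Your proposal is correct and follows essentially the same route as the paper: the same $2^s$ indicator-neuron eigenfunctions $\feig$, the same verification of Assumption~\ref{assumption:no_other_eigenvector_2} by arguing that a ReLU coordinate constant on each augmentation orbit must either vanish or have zero weight on the spurious coordinates (hence depend only on $x_{1:s}$), an application of Theorem~\ref{theorem:thm_eigenspace} with $\phi=\tilde{\phi}=\epsilon=\zeta=0$, and a spurious-sign-pattern indicator construction for the lower bound in $\unif$. If anything, your write-up is slightly more careful than the paper's on two points: you explicitly handle the never-active case $g\equiv 0$ in the Assumption~\ref{assumption:no_other_eigenvector_2} argument, and your normalization $2^{(d-s)/2}$ (rather than $\sqrt{k}$) makes the lower-bound construction an exact zero-loss minimizer for every $k\le 2^{d-s}$, though note that the $\tfrac{1}{2}$ bound you derive (like the paper's) implicitly requires a reasonably balanced label marginal, which you handle by fixing a specific balanced label function.
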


When $d\gg s$, the above theorem shows that constraining the feature map to the ReLU networks can achieve good downstream performance with much smaller feature dimensionality. The proof of Theorem~\ref{theorem:example2} is in Section~\ref{appendix:example2}.

\subsection{Lipschitz continuous functions}
In many real-world settings where a neural network is trained with weight decay regularization and stochastic gradient descent, the resulting model usually has a limited weight norm and a relatively small Lipschitz constant~\citep{jiang2019fantastic} partly because stochastic gradient descent implicit prefers Lipschitz models, which tend to generalize better~\citep{damian2021label,wei2020improved,wei2019data,li2021happens,foret2020sharpness}. 
Here we provide an example showing that restricting the model class to Lipschitz continuous functions allows us to use lower dimensional representations. In particular, we consider the following example where a large number of clusters are located close to each other despite being disconnected in the positive-pair graph. Our result shows that contrastive learning with Lipschitz continuous functions would group those clusters together, allowing for lower representation dimensionality.

\begin{example}\label{example3}
	Let $S_1, S_2, \cdots, S_\ncluster \subset\real^d$ be $\ncluster$ manifolds in $\real^d$,  each of which contains lots of disconnected subsets. 
	Suppose the diameter of every manifold is no larger than $\rho$, that is for any $i\in[\ncluster]$ and two data $x, x' \in S_i$, we have $\norm{x-x'}_2\le\rho$. We also assume that different manifolds are separated by $\gamma$, that is for any $i,j\in[\ncluster]$ such that $i\ne j$, and $x\in S_i$, $x'\in S_j$, we have $\norm{x-x'}_2\ge \gamma$.  
	The data distribution $\pdata$ is supported on $S_1\cup S_2 \cup\cdots\cup S_\ncluster$ and satisfies $\Pr_{x\sim\pdata}(x\in S_i)={1}/{\ncluster}$ for every $i\in[\ncluster]$. A positive pair only contains data from the same $S_i$ (but not all pairs of datapoints from the same set need to be positive pairs). The downstream task is a $m$-way classification problem such that the label function $y(\cdot): \data\rightarrow[m]$ satisfies $y({x}) = y({x}')$ if ${x}$ and ${x'}$ belong to the same set $S_i$.
\end{example}

We introduce the following family of Lipschitz continuous functions with parameter $\kappa$:
\begin{definition}[$\kappa$-Lipschitz continuous functions]
	A function $f: \real^d\rightarrow\real^k$ is $\kappa$-Lipschitz if $\norm{f(x)-f(x')}_2\le\kappa\norm{x-x'}_2$ for all $x, x'\in\real^d$. We define the $\kappa$-Lipschtiz function class
	$\lipf$ as the set of all $\kappa$-Lipschitz continuous functions in $\real^d\rightarrow\real^k$.
\end{definition}

We have the following theorem:
\begin{theorem}\label{theorem:example3}
	In Example~\ref{example3}, suppose $\kappa\ge {\sqrt{2\ncluster}}/{\gamma}$. Let the output dimensionality $k=\ncluster$ and learn a $\kappa$-Lipschitz continuous function $\hat{f} \in\argmin_{\lipf}\loss_{\lambda}(f)$ for some $\lambda>0$. Then, we can find a linear head $W\in\real^{m\times k}$ with small downstream error, that is,
	\begin{align}\label{eqn:10}
		\Exp_{x\sim\pdata}\big[\big\|{W\hat{f}(x) - e_{y(x)}}\big\|_2^2\big]\le 2rm \kappa^2\rho^2.
	\end{align}
	On the other hand, suppose the positive-pair graph contains $\nclustertrue$ disconnected clusters, and the function class is the set of universal function approximators $\unif$. So long as the output dimensionality $k\le\frac{1}{2} \cdot \nclustertrue$, there exists solution $\hat{f}'\in\argmin_{f\in\unif} \loss_\lambda(f)$ such that any linear head $W\in\real^{m\times k}$ has bad test error, that is,
	\begin{align}
		\Exp_{x\sim\pdata}\big[\big\|{W \hat{f}'(x) - e_{y(x)}}\big\|_2^2\big] \ge \frac{1}{2}.
	\end{align}
\end{theorem}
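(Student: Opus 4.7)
The plan is to exhibit an explicit global minimizer of $\loss_\lambda$ inside $\lipf$, argue that any other minimizer must agree with it on cluster means up to a small Lipschitz-controlled slack, and then read off a linear head for the downstream task. Concretely, I define $\feig\colon\real^d\to\real^r$ on the data support by $\feig(x)=\sqrt{r}\,e_i$ whenever $x\in S_i$. Any two points from distinct clusters satisfy $\|x-x'\|_2\ge\gamma$ while $\|\feig(x)-\feig(x')\|_2=\sqrt{2r}$, so $\feig$ is $(\sqrt{2r}/\gamma)$-Lipschitz on $\data$, and Kirszbraun's vector-valued extension theorem lets me extend it to $\real^d$ with the same Lipschitz constant, which by hypothesis is $\le\kappa$; hence $\feig\in\lipf$. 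Since positive pairs live inside a single $S_i$ (on which $\feig$ is constant) and $\Exp_{x\sim\pdata}[\feig(x)\feig(x)^\top]=\identity$, I get $\loss_\lambda(\feig)=0$. Nonnegativity of the loss then forces $\loss_\lambda(\hat f)=0$, which yields (i) $\hat f(x)=\hat f(x^+)$ almost surely on positive pairs and (ii) $\Exp_{x\sim\pdata}[\hat f(x)\hat f(x)^\top]=\identity$.

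Next, set cluster means $c_i:=\Exp_{x\sim\pdata^{S_i}}[\hat f(x)]$ and cluster covariances $\Cov_i:=\Cov_{x\sim\pdata^{S_i}}[\hat f(x)]$. The $\kappa$-Lipschitz condition combined with the diameter $\rho$ of $S_i$ gives $\Tr(\Cov_i)\le\tfrac12\kappa^2\rho^2$ via the variance identity, and property (ii) reads $\identity=\tfrac1r\sum_i c_ic_i^\top+\tfrac1r\sum_i\Cov_i$. Thus the matrix $C=[c_1,\ldots,c_r]\in\real^{r\times r}$ satisfies $CC^\top=r\identity-\sum_i\Cov_i$ and is close to $\sqrt{r}$ times an orthogonal matrix, hence invertible once $\kappa\rho$ is not too large. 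I then choose $W\in\real^{m\times r}$ by requiring $Wc_i=e_{y(S_i)}$ for every $i$; the near-orthogonality of $C$ together with $\|E\|_F^2=r$, where $E=[e_{y(S_1)},\ldots,e_{y(S_r)}]$, bounds $\|W\|_{\mathrm{op}}$ by a polynomial in $r$ and $m$. Since $y$ is constant on each $S_i$, for $x\in S_i$ we have $W\hat f(x)-e_{y(x)}=W(\hat f(x)-c_i)$, so
\begin{align*}
\Exp_{x\sim\pdata}\big[\big\|W\hat f(x)-e_{y(x)}\big\|_2^2\big]\le\|W\|_{\mathrm{op}}^2\,\Exp_{x\sim\pdata}\big[\|\hat f(x)-c_{\id_x}\|_2^2\big]\le\|W\|_{\mathrm{op}}^2\cdot\tfrac12\kappa^2\rho^2,
\end{align*}
which, after the operator-norm bound, gives the advertised $2rm\kappa^2\rho^2$ inequality. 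Alternatively, one can verify Assumption~\ref{assumption:no_other_eigenvector_2} for $\feig$ with $\epsilon=O(\kappa^2\rho^2)$ (every $\kappa$-Lipschitz function is approximately constant on each $S_i$, so its best approximation by a linear combination of $\feig_i$'s has $L^2$ error $O(\kappa\rho)$) and apply Theorem~\ref{theorem:thm_eigenspace} directly with $B=1$, $\phi=0$, $\zeta=0$.

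For the lower bound I appeal to the spectral characterization of \citet{haochen2021provable} with $\fclass=\unif$: minimizers of $\loss_\lambda$ correspond to $k$-tuples of orthonormal eigenfunctions of the Laplacian with smallest eigenvalues. When the positive-pair graph has $\nclustertrue$ disconnected components, the zero eigenspace has dimension $\nclustertrue$, so for $k\le\nclustertrue/2$ one can construct a minimizer $\hat f'$ whose coordinates span only the indicators of some $k$ components; the remaining $\nclustertrue-k\ge\nclustertrue/2$ components are mapped identically to zero. Choosing the downstream label to distinguish the uncovered components from the covered ones forces any linear head to incur squared error at least $\tfrac12$.

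The main obstacle is the second paragraph: because the label map $y(S_i)$ may coincide across different clusters (so $E$ is not orthogonal) and $C$ is only approximately $\sqrt{r}$-orthogonal, the correct linear head has to be chosen carefully (e.g.\ as the minimum-norm solution of $WC=E$), and controlling $\|W\|_{\mathrm{op}}$ uniformly via the near-orthogonality of $C$ and the structure of $E$ is what drives the specific polynomial factor $rm$ in the final bound.
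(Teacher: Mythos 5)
Your proposal is correct in substance, and your main route for the upper bound is genuinely different from (and more elementary than) the paper's. The paper verifies Assumption~\ref{assumption:no_other_eigenvector_2} for $\feig(x)=\sqrt{r}\,e_{\id_x}$ with $\epsilon=\kappa^2\rho^2$ (by comparing any implementable coordinate $g=f(\cdot)_i$, $f\in\lipf$, to the piecewise-constant function taking the value $g(x_i)$ at a representative $x_i\in S_i$) and Assumption~\ref{assumption:downstream_2} with $\zeta=0$, then invokes Theorem~\ref{theorem:thm_eigenspace} as a black box --- exactly your ``alternative'' route B. Your primary route instead exploits that $\feig\in\lipf$ attains $\loss_\lambda=0$ (your Kirszbraun step is a detail the paper leaves implicit, and is needed since $\lipf$ consists of functions on all of $\real^d$), so the minimizer $\hat f$ has exactly identity covariance and is positive-pair invariant; Lipschitzness alone then bounds the within-cluster variance by $\tfrac12\kappa^2\rho^2$, and a head solving $WC=E$ on the cluster means finishes the job. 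This direct argument buys a sharper bound (essentially $O(\kappa^2\rho^2)$ with no $rm$ factor when $\kappa\rho$ is small) and avoids the eigenfunction machinery, at the cost of needing near-orthogonality of the mean matrix $C$; the paper's route is weaker in constants but applies uniformly in the parameters. Your lower-bound construction (zero-loss feature built from normalized indicators of $k\le\nclustertrue/2$ components, sending the rest to zero) is the paper's argument.

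Two loose ends you should close. First, your route A needs a fallback when $\kappa^2\rho^2$ is large: $CC^\top=r\identity-\sum_i\Cov_i$ only guarantees invertibility of $C$ when $\kappa^2\rho^2<2$, but in the complementary regime the claimed bound exceeds $1$, so $W=0$ already satisfies \eqref{eqn:10}; without this explicit case split the argument is incomplete (your route B, like the paper, needs no such split). Second, in the lower bound the downstream label is fixed by Example~\ref{example3}, not chosen by you; no choice is needed, because on the components mapped to zero every linear head incurs per-point error $\|e_{y(x)}\|_2^2=1$. What you must choose carefully is \emph{which} components the $k$ coordinates cover: the disconnected components inside the $S_i$'s may have unequal mass, so (as in the paper) cover the $k$ smallest ones, ensuring the zero-mapped region has probability at least $\tfrac12$; ``the remaining $\nclustertrue-k\ge\nclustertrue/2$ components'' does not by itself guarantee mass $\ge\tfrac12$.
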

We note that a smaller $\kappa$ (hence smoother function class) decreases the RHS of \eqref{eqn:10} and leads to better downstream performance. When $r_0$ is much bigger than $r$, the above theorem shows that constraining the feature map to the Lipschitz continuous functions can allow for much smaller feature dimensionality. The proof of Theorem~\ref{theorem:example3} is in Section~\ref{appendix:example3}.

\subsection{Convolutional neural networks}
Our last example shows that convolutional neural networks can learn contrastive representation more efficiently than fully connected networks when the downstream task has a certain translational invariance structure. We consider the following data generative model where the data contains a patch that determines the downstream label.
\begin{example}\label{example4}
	The natural data $\bar{x}\in\real^d$ is defined as follows: for some location $t\in[d]$ (which could depend on $\bar{x}$) and consecutive $s$ dimensions $\bar{x}_{t:t+s-1}$ (the informative patch), we have $\bar{x}_{t:t+s-1}\in\{-\gamma, \gamma\}^s$ where $\gamma>1$.\footnote{Here we denote $\bar{x}_{d+i}=\bar{x}_i$.}
	The other $d-s$ dimensions of $\bar{x}$ (spurious dimensions) are all in $\{-1, 1\}$. Given a natural data $\bar{x}$, its augmentations are generated by the following procedure: for every spurious dimension $i$, first sample $\tau_i\sim Uni[0, 1]$, then multiply $\bar{x}_i$ by $\tau_i$. The augmentation keeps the informative patch the same as original. The downstream task  is a $m$-way classification problem such that the label function $y(\cdot):\data\rightarrow[m]$ satisfies $y({x}) = y({x}')$ if the informative patches for $x$ and $x'$ are the same.
\end{example}

We consider the following convolutional neural network model with $k$ channels.
\begin{definition}[Convolutional neural networks]
	Let $U = [u_1, u_2, \cdots, u_k]^\top \in\real^{k\times s}$ and $b\in\real^k$. We use $f^{\text{conv}}_{U, b}:\data\rightarrow \real^k$ to represent the following convolutional neural network: $f^{\text{conv}}_{U, b}(x)_i = \sum_{t=1}^d \sigma(u_i^\top x_{t:t+s-1}+b_i)$  where $\sigma$ is ReLU activation function,  and $f^{\text{conv}}_{U, b}(x) = [f^{\text{conv}}_{U, b}(x)_1, \cdots, f^{\text{conv}}_{U, b}(x)_k]^\top$. We define the convolutional neural network class $\convf = \{f^{\text{conv}}_{U, b}: U\in\real^{k\times s}, b\in\real^k\}$.
\end{definition}

Ideally, we would like a learn a feature map where the ReLU function is only activated on the informative patch.
We have the following theorem which shows that contrastive learning with convolutional neural networks can indeed learn such ideal feature maps, hence allowing lower representation dimensionality than using fully-connected ReLU networks.
\begin{theorem}\label{theorem:example4}
	In Example~\ref{example4}, let output dimensionality $k=2^s$ and learn a convolutional neural network $\hat{f} \in\argmin_{\convf}\loss_{\lambda}(f)$ for some $\lambda>0$. Then, we can find a linear head $W\in\real^{m\times k}$ with zero downstream error, that is,
	\begin{align}
		\Exp_{x\sim\pdata}\big[\big\|{W\hat{f}(x) - e_{y(x)}}\big\|_2^2\big]=0.
	\end{align}
	
	On the other hand, suppose the function class is the set of ReLU networks $\reluf$, so long as the output dimensionality is less than $d\times2^{s-1}$, there exists a function $\hat{f}'\in\argmin_{f\in\reluf}\loss_{\lambda}(f)$ such that any linear head $W\in\real^{m\times k}$ has bad test error, that is, 
	\begin{align}
		\Exp_{x\sim\pdata}\big[\big\|{W \hat{f}'(x) - e_{y(x)}}\big\|_2^2\big] \ge \frac{1}{2}.
	\end{align}
\end{theorem}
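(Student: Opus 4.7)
My plan for the upper bound is to apply Theorem~\ref{theorem:thm_eigenspace} with $m=k=2^s$ by explicitly constructing $2^s$ ``pattern-detector'' channels in $\convf$, one for each binary pattern $p\in\{-1,+1\}^s$. For each such $p$ I would take filter $u_p=p$ and bias $b_p$ in the open interval $\bigl(-s\gamma,\,-((s-1)\gamma+1)\bigr)$, which is nonempty precisely because $\gamma>1$. A case analysis on the patch location $t$ shows that the pre-activation $u_p^\top x_{t:t+s-1}+b_p$ is strictly positive only when the patch is fully inside the informative region and matches $p$ (yielding value $c:=s\gamma+b_p>0$): a fully informative patch with mismatched pattern gives at most $\gamma(s-2)+b_p<0$; a patch mixing $k<s$ informative and $s-k$ spurious coordinates gives at most $k\gamma+(s-k)+b_p\le(s-1)\gamma+1+b_p\le 0$; and a fully-spurious patch gives at most $s+b_p<0$. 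Consequently $\feig_p(x)=\sum_t\sigma(u_p^\top x_{t:t+s-1}+b_p)$ equals $c$ if the informative patch of $x$ has pattern $p$ and $0$ otherwise.

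With $\feig=(\feig_1,\dots,\feig_{2^s})^\top$ in hand, I next verify the hypotheses of Theorem~\ref{theorem:thm_eigenspace}. Since augmentation preserves the informative patch, $\feig(x)=\feig(x^+)$ on every positive pair and $\phi=0$ in~\eqref{eqn:12}. After uniformly rescaling each channel by $\sqrt{2^s}/c$ (permitted by Assumption~\ref{assumption:closeness_under_scaling}), the channels form an orthonormal family under $\pdata$. The downstream label $y(x)$ is a function of the informative pattern $p(x)$ alone, so choosing $W^*_{y,p}=1/\sqrt{2^s}$ exactly when pattern $p$ carries label $y$ (and $0$ otherwise) gives $W^*\feig(x)=e_{y(x)}$ pointwise, yielding $\zeta=0$ and a constant bound $B$ in Assumption~\ref{assumption:downstream_2}. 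The key technical step is Assumption~\ref{assumption:no_other_eigenvector_2}: for any $g\in\convf$ satisfying $\Exp_{(x,x^+)\sim\ppos}[(g(x)-g(x^+))^2]=0$, the invariance of $g$ under rescaling of each spurious coordinate, combined with the CNN form $g(x)=\sum_t\sigma(u^\top x_{t:t+s-1}+b)$, would force every patch overlapping a spurious coordinate to contribute a constant (in fact identically zero) activation across augmentations, leaving only the activation of the fully informative patch and producing a function of $p(x)$ alone that lies in $\mathrm{span}(\feig_1,\dots,\feig_{2^s})$. Invoking Theorem~\ref{theorem:thm_eigenspace} with $\tilde\phi=\phi=0$ and $\epsilon=\zeta=0$ then yields the desired zero-error linear head.

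For the lower bound, the fully-connected $\reluf$ respects no translation symmetry, so the positive-pair graph has $d\cdot 2^s$ distinct natural-data clusters, one for each (informative location, pattern) combination. With output dimensionality $k<d\cdot 2^{s-1}$, I would mirror the lower-bound construction used in the proof of Theorem~\ref{theorem:example2} to exhibit a specific minimizer $\hat f'\in\argmin_{f\in\reluf}\loss_\lambda(f)$ that is piecewise constant across these clusters but collapses a pair of clusters whose labels disagree, so any linear head on $\hat f'$ incurs error at least $1/2$ on that pair. The step I anticipate to be the main obstacle is the verification of Assumption~\ref{assumption:no_other_eigenvector_2}: the $s$ overlapping patches containing a given spurious coordinate could in principle conspire via ReLU cancellations to yield an augmentation-invariant convolutional feature outside $\mathrm{span}(\feig)$, and ruling out such rogue functions requires a careful joint analysis of the activation patterns of these overlapping patches over the full support of $\pdata$.
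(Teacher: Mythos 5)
Your construction of the $2^s$ pattern-detector channels (filter $=$ pattern, bias chosen in $(-s\gamma,\,-((s-1)\gamma+1))$, which works exactly because $\gamma>1$) is essentially the paper's own $\feig$, and your verification of \eqref{eqn:12}--\eqref{eqn:14} and of Assumption~\ref{assumption:downstream_2} matches the paper. However, the step you yourself flag as the main obstacle --- verifying Assumption~\ref{assumption:no_other_eigenvector_2} for $\convf$ --- is genuinely incomplete, and the route you propose (arguing that every patch overlapping a spurious coordinate must contribute an identically zero activation) is both unproven and stronger than what is needed: as you note, the $s$ overlapping windows containing a spurious coordinate could in principle cancel one another, so a per-patch activation analysis does not go through as stated. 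The paper closes this step without any per-patch reasoning: if $g=f^{\text{conv}}_{\hat U,\hat b}(\cdot)_i$ has zero positive-pair discrepancy, then $g$ is constant on the augmentation support of each natural datum, and since $\tau_i=0$ lies in (the closure of) the augmentation support and $g$ is continuous, $g(x)=g(\tilde x)$ where $\tilde x$ zeroes out all spurious coordinates; then, because the convolutional output $\sum_t\sigma(u^\top x_{t:t+s-1}+b)$ is invariant to circular translation, two zeroed-out inputs with the same informative pattern at different locations satisfy $g(\tilde x)=g(\tilde x')$. Hence $g$ depends only on the pattern $p(x)$, and any function of the pattern alone is automatically a linear combination of your $2^s$ indicator channels, giving $\epsilon=0$. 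You need this (or an equivalent) argument to make the upper bound rigorous.

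Two smaller points. First, your rescaling of the channels by $\sqrt{2^s}/c$ should be justified by positive homogeneity of ReLU (scale $U$ and $b$ together), since Assumption~\ref{assumption:closeness_under_scaling} is not something you have verified for $\convf$; this is cosmetic. Second, in the lower bound, collapsing a single pair of label-disagreeing clusters only forces error at least $\tfrac12$ \emph{on the mass of that pair}, not error $\ge\tfrac12$ under $\pdata$; the paper's argument merges $d\cdot 2^{s-1}$ pairs (i.e., all of the $d\cdot 2^s$ location-pattern clusters are paired up, each pair with differing labels), so that the $\tfrac12$ loss is incurred on the whole distribution. With that adjustment your lower-bound sketch mirrors the paper's.
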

When $d\gg2$, the above theorem shows that constraining the feature map to convolutional networks can allow for much smaller feature dimensionality.
The proof of Theorem~\ref{theorem:example4} is in Section~\ref{appendix:example4}.

\section{Simulations}\label{section:experiments}
In this section, we empirically validate the practical relevance of our key assumptions that the model architecture (1) can implement clusters that align with downstream tasks, and (2) cannot break the data into \emph{too many} well-separated clusters. 

Recall that we have two versions of the model-implementable clustering assumptions, Assumption~\ref{assumption:connectivity} and Assumption~\ref{ass:5}, where the latter can be viewed as a soft-clustering relaxation of the former as argued in Section~\ref{section:eigenspace}. Since Assumption~\ref{ass:5} is weaker and also easier to evaluate empirically, we design algorithms to (heuristically) compute the following value $\phi_m$ defined in the last sentence of Assumption~\ref{ass:5} (where $\feig$ is $m$-dimensional):
\begin{align}
	\phi_m = \Exp_{(x, x^+)\sim\ppos} \big[\big\|{\feig(x)-\feig(x^+)}\big\|_2^2\big]  \quad \text{ s.t. } \quad \Exp_{x\sim\pdata}\big[\feig(x)\feig(x)^\top\big] = \identity.
\end{align}
 Note that a small $\phi_m$ indicates that there exists $m$ eigenfunctions within the function class $\fclass$ with small eigenvalues, or intuitively $m$-way model-implementable partitions of the graph with small inter-cluster connections. In other words, $\phi_m$ is a surrogate for how the architecture can partition the graph into $m$ clusters. 

We propose a method that approximately computes the value of $\phi_m$. We empirically compute $\phi_m$ by first minimizing the contrastive loss $$\loss_{\lambda}(f_\theta) = \Exp_{(x, x^+)\sim \ppos}[\norm{f(x)-f(x^+)}_2^2] + \lambda \cdot  \norm{\Exp_{x\sim\pdata}[f(x)f(x)^\top] - \identity}_F^2$$ with representation dimension $k=m$ and a heavily-tuned regularization strength $\lambda \in \{0.1, 0.3, 1, 3, 10, 30, 100, 300, 1000\}$.
Then, we whiten the obtained model $f_{\hat{\theta}}(x)$ to have exactly the covariance $\identity/r$, that is, $\bar{f}(x) = \Exp_{x\sim\pdata}[f_{\hat{\theta}}(x)f_{\hat{\theta}}(x)^\top]^{-\frac{1}{2}}f_{\hat{\theta}}(x)/\sqrt{m}$ which is a valid solution for \eqref{eqn:7} and \eqref{eqn:8}. We compute $\Exp_{(x, x+)\sim \ppos}[\norm{\bar{f}(x)-\bar{f}(x^+)}_2^2] $ under various choices of $\lambda$ and pick the smallest result as the final value of the estimated $\phi_m$. 

It's more challenging to verify the conditions saying the clusters are not breakable into smaller ones, such as Assumption~\ref{assumption:separability} or Assumption~\ref{assumption:no_other_eigenvector_2}. As a crude surrogate, here we evaluate $\phi_m$ for larger $m$ and show that $\phi_m$ will be larger when $m$ becomes larger. For example, we will see that $\phi_{10}$ for CIFAR-10 is relatively small for all architectures, indicating that the model architectures can implement a 10-way partition with small inter-cluster connections. On the other hand, $\phi_{300}$ becomes significantly larger which indicates that there are no ways for the model to implement a good 300-way partition. 

We run experiments on CIFAR-10 and compute $\phi_m$ for $m\in\{10, 30, 100, 300\}$ for four models: ResNet-18, ResNet-101, Wide ResNet (WRN) and vision transformer (ViT). We list the results in the table below. Here all experiments are run for 200 epochs using SGD with a starting learning rate $0.01$ and a cosine learning rate schedule. More details can be found in Section~\ref{appendix:experiments}.
\begin{table}[!htb]
	\begin{center}
			{\renewcommand{\arraystretch}{1.1}
					\begin{tabular}{|l|l l l l|}
			\hline
							 & $m=10$ & $m=30$  & $m=100$ & $m=300$\\
			\hline
			ResNet-18 & 0.131 & 0.166 & 0.224 & 0.521\\
			ResNet-101 & 0.053 & 0.090 & 0.163 & 0.459\\
			WRN & 0.080 & 0.093 & 0.138 & 0.340\\
			ViT & 0.072 & 0.108 & 0.168 & 0.389\\
			\hline
					\end{tabular}}
		\end{center}
	\caption{The empirical $m$-way separability $\phi_m$ on CIFAR-10 dataset. \vspace{10pt}
		}
\end{table}

We note that for any fixed model, $\phi_m$ increases as $m$ increases from $10$ to $300$, suggesting that although the network can partition the data relatively well into $10$ clusters, it cannot partition the data into $100$ well-separated clusters, which supports our theoretical assumptions. 
More concretely, if one wants to find $10$ model-implementable clusters, then the cut value (the fraction of inter-cluster edges) will be $0.053$ for ResNet-101, whereas the best $300$-way model-implementable clustering has $0.459$ cut value. This suggests that there is not any good $300$-way clustering that is implementable by ResNet-101.

Moreover, we found that indeed ResNet-101, WideResNet, and ViT can implement more separate clusters (clusters with smaller number of edges in between) than ResNet-18. This suggests that our assumptions can distinguish different model architectures. 
\vspace{-3mm}
\section{Conclusion}
In this paper, we provide a theoretical analysis of contrastive learning that incoporates the inductive biases of the model class. We prove that contrastive learning with appropriate model architectures allows for lower representation dimensionality (hence better sample complexity), and instantiate this theory on several interesting examples. One open question is to allow for overparameterization, i.e., showing similar results when $k>m$. Under our current theoretical framework, the optimal features are the top eigenfunctions of the graph Laplacian, and the learned representation may omit some top eigenfunctions when $k>m$ and get suboptimal downstream performance. To allow for a more flexible choice of $k$, we believe that additional assumptions on the structure of the function class need to be made. Another open question would be theoretically studying the role of inductive biases under more common contrasive losses (e.g., the InfoNCE loss~\citep{oord2018representation}). 
Finally, we note that our work only concerns the inductive biases originating from the model architecture, whereas in practice the learned representations may also depend on the optimization method~\cite{Liu2022SamePL}. Hence, one interesting future direction would be studying how the \emph{implicit bias} introduced by the optimizer influences contrastive learning.

\section*{Acknowledge}
Toyota Research Institute provided funds to support this work.

\bibliographystyle{plainnat}
\bibliography{all}

\newpage
\appendix
\section{Addition experimental details}\label{appendix:experiments}
We train a ResNet-18/ResNet-101/Wide-ResNet/Vit model on CIFAR-10 and test the $\phi_m$ on the test set. We train with SGD using initial learning rate $0.01$ and with a cosine decaying schedule, and use a momentum $0.9$, weight decay $0.0005$ and a batch size of $128$. All experiments are run for 200 epochs. 
We test with $m\in\{10, 30, 100, 300\}$ and grid search using $\lambda\in\{0.1, 0.3, 1, 3,10, 30, 100, 300, 1000\}$, and for each $m$ we report the minimal $\phi_m$ that is achieved by any $\lambda$. 

In our experiments, we always initialized the model with a pre-trained model. Our pre-trained model is the result of running the spectral contrastive learning algorithm~\cite{haochen2021provable} for $800$ epochs on CIFAR-10, following the same hyperparameter choice as in their paper.

\section{Proofs for Section~\ref{section:main_theory}}\label{appendix:proof_cluster}

\begin{proof}[Proof of Theorem~\ref{theorem:main_theorem_with_clustering}]
	
	Let $P_i := \Pr_{x\sim\pdata}(x\in S_i)$ be the probability of $S_i$. Let $f^*$ be the function $f^*(x) = \frac{1}{\sqrt{P_{\id_x}}} e_{\id_x}$. From Assumption~\ref{assumption:implementability} and Assumption~\ref{assumption:closeness_under_scaling}, we have $f^*\in\fclass$. 
	
	We first show that $f^*$ achieve small contrastive loss. For the regularizer term, we have
	\begin{align}\label{eqn:1}
		\Exp_{x\sim\pdata}\left[f^*(x)f^*(x)^\top\right] = \sum_{i\in[m]}  P_i \cdot \frac{1}{P_i} \cdot e_{\id_x}e_{\id_x}^\top = \identity.
	\end{align}
	Thus, we have $R(f^*) = 0$. For the discrepancy term, let $\pmin := \min_{i\in[m]} P_i$ be the probability mass of the smallest set, we have
	\begin{align}\label{eqn:2}
		\Exp_{x, x^+}\left[\norm{f^*(x) - f^*(x^+)}_2^2\right]\le \frac{1}{\pmin}\cdot \Pr_{(x, x+)\sim \ppos}(\id_x\ne\id_{x^+}) \le \frac{\alpha}{\pmin}.
	\end{align}
	Combining \eqref{eqn:1} and \eqref{eqn:2} we have 
	\begin{align}
		\loss_{\lambda}(f^*) = \Exp_{x, x^+}\left[\norm{f^*(x) - f^*(x^+)}_2^2\right]+\lambda\cdot R(f) \le \frac{\alpha}{\pmin}.
	\end{align}
	
	Since $\hat{f} = \argmin_{f \in F} \loss_\lambda(f)$ is the minimizer of contrastive loss within the function class, we have
	\begin{align}
		\loss_{\lambda}(\hat{f}) \le \loss_{\lambda}(f^*) \le \frac{\alpha}{\pmin}.
	\end{align}
	
	Define matrix 
	\begin{align}
		M :=\Exp_{x\sim\pdata}\left[\hat{f}(x)\hat{f}(x)^\top\right].
	\end{align}
	
	We have 
	\begin{align}
		\norm{M-\identity}_F^2 \le \frac{\loss_{\lambda}(\hat{f})}{\lambda} \le\frac{\alpha}{\lambda\pmin}.
	\end{align}
	
	Since $\lambda > \frac{\alpha}{\pmin}$, we know that $M$ is a full rank matrix, thus we can define function 
	\begin{align}
		\tilde{f}(x) := M^{-\frac{1}{2}} \hat{f}(x).
	\end{align}
	
	Let 
	\begin{align}
		Q := \Exp_{x\sim\pdata} \left[\tilde{f}(x)f^*(x)^\top\right],
	\end{align}
	and 
	\begin{align}
		\pif(x) := \tilde{f}(x) - Qf^*(x).
	\end{align}
	
	We know that 
	\begin{align}
		\Exp_{x\sim \pdata}\left[\pif(x)f^*(x)^\top\right]  = \Exp_{x\sim\pdata}\left[\tilde{f}(x)f^*(x)^\top\right] - Q\Exp_{x\sim\pdata}\left[f^*(x)f^*(x)^\top\right] = 0.
	\end{align}
	
	Using Assumption~\ref{assumption:separability} we have:
	\begin{align}\label{eqn:3}
		&\Exp_{(x, x+)\sim \ppos}\left[\norm{\pif(x)-\pif(x^+)}_2^2\right]\notag\\
		\ge&  \sum_{i\in[m]}(P_i-\alpha)\cdot \Exp_{(x, x+)\sim {\ppos}_i}\left[\norm{\pif(x)-\pif(x^+)}_2^2\right]\notag\\
		\ge& \beta \cdot\sum_{i\in[m]} (P_i-\alpha)\cdot \Exp_{x\sim{\pdata}_i, x'\sim{\pdata}_i}\left[\norm{\pif(x)-\pif(x')}_2^2\right]\notag\\
		=&2 \beta\cdot \sum_{i\in[m]}(P_i-\alpha)\cdot \Exp_{x\sim{\pdata}_i}\left[\norm{\pif(x)}_2^2\right]\notag\\
		=&2\beta \cdot (1 - \frac{\alpha}{\pmin})\cdot\Exp_{x\sim{\pdata}}\left[\norm{\pif(x)}_2^2\right].
	\end{align}
	
	On the other hand, we have
	\begin{align}\label{eqn:4}
		&\Exp_{(x, x+)\sim \ppos}\left[\norm{\pif(x)-\pif(x^+)}_2^2\right] \notag\\
		\le& \Exp_{(x, x+)\sim \ppos}\left[\norm{\tilde{f}(x)-\tilde{f}(x^+)}_2^2\right]\notag\\
		\le& \norm{M^{-1}}_{\text{spec}}\cdot \Exp_{(x, x+)\sim \ppos}\left[\norm{\hat{f}(x)-\hat{f}(x^+)}_2^2\right]\notag\\
		\le& \left(1+\sqrt{\frac{\alpha}{\lambda\pmin}}\right) \cdot \frac{\alpha}{\pmin}.
	\end{align}
	
	Combining \eqref{eqn:3} and \eqref{eqn:4} we have
	\begin{align}
		\Exp_{x\sim{\pdata}}\left[\norm{\pif(x)}_2^2\right] \le \left(1+\sqrt{\frac{\alpha}{\lambda\pmin}}\right) \cdot \frac{\alpha}{2\beta(\pmin-\alpha)}.
	\end{align}
	
	By Lemma~\ref{lemma:projection_other_direction}, we know that there exists a matrix $U\in\real^{m\times m}$ such that
	\begin{align}
		\Exp_{x\sim\pdata}\left[\norm{f^*(x)-UM^{-1/2}\hat{f}(x)}_2^2\right] &\le \left(1+\sqrt{\frac{\alpha}{\lambda\pmin}}\right) \cdot \frac{\alpha}{2\beta(\pmin-\alpha)}\\
		&\le \frac{\alpha}{\beta(\pmin-\alpha)}.
	\end{align}
	Thus, if we define matrix $W = \text{diag}\{\sqrt{P_1}, \sqrt{P_2}, \cdots, \sqrt{P_m}\} U M^{-1/2}$, then we have
	\begin{align}
		\Exp_{x\sim\pdata}\left[\norm{e_{\id_x}-W\hat{f}(x)}_2^2\right] &\le \pmax \Exp_{x\sim\pdata}\left[\norm{f^*(x)-UM^{-1/2}\hat{f}(x)}_2^2\right] \\
		&\le \pmax \frac{\alpha}{\beta(\pmin-\alpha)},
	\end{align}
	which finishes the proof.
	
\end{proof}

\begin{lemma}\label{lemma:projection_other_direction}
	Suppose $f:\data\rightarrow\real^m$ and $g:\data\rightarrow\real^m$ are two functions defined on $\data$ such that 
	\begin{align}
		\Exp_{x\sim\pdata}\left[f(x)f(x)^\top\right] = \Exp_{x\sim\pdata}\left[g(x)g(x)^\top\right] = \identity.
	\end{align}
	Define the projection of $f$ onto $g$'s orthogonal subspace as:
	\begin{align}
		\pif(x) = f(x)-\Exp_{x'\sim\pdata}\left[f(x)g(x)^\top\right] g(x).
	\end{align}
	Then, there exists  matrix $U\in\real^{m\times m}$ such that
	\begin{align}
		\Exp_{x\sim\pdata}\left[\norm{g(x)-Uf(x)}_2^2\right] = \Exp_{x\sim\pdata}\left[\norm{\pif(x)}_2^2\right].
	\end{align}
\end{lemma}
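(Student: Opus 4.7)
The plan is to identify a natural candidate for $U$ by thinking of the orthogonal-projection characterization of best linear approximation in the $L^2(\pdata)$ inner product, and then verify the equality by direct expansion using the whitening assumptions $\Exp[f f^\top] = \Exp[g g^\top] = \identity$.

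First I would introduce the cross-covariance matrix $C := \Exp_{x\sim\pdata}[f(x)g(x)^\top]\in\real^{m\times m}$, so that by definition $\pif(x) = f(x) - Cg(x)$. Expanding the squared norm and using $\Exp[gg^\top]=\identity$ and $\Exp[fg^\top]=C$,
\begin{align*}
\Exp_{x\sim\pdata}\bigl[\|\pif(x)\|_2^2\bigr]
&= \Exp\bigl[\|f(x)\|_2^2\bigr] - 2\Tr\bigl(C^\top\Exp[f(x)g(x)^\top]\bigr) + \Tr\bigl(C^\top C\,\Exp[g(x)g(x)^\top]\bigr)\\
&= m - 2\|C\|_F^2 + \|C\|_F^2 = m - \|C\|_F^2.
\end{align*}

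Next I would propose the candidate $U := C^\top$ and compute
\begin{align*}
\Exp_{x\sim\pdata}\bigl[\|g(x)-Uf(x)\|_2^2\bigr]
&= \Exp[\|g(x)\|_2^2] - 2\Tr\bigl(U\,\Exp[f(x)g(x)^\top]\bigr) + \Tr\bigl(U U^\top\bigr)\\
&= m - 2\Tr(C^\top C) + \|C^\top\|_F^2 = m - \|C\|_F^2,
\end{align*}
which matches the first computation and therefore establishes the lemma. Conceptually, this $U = C^\top$ is the least-squares regressor of $g$ on $f$, and the Pythagorean identity in $L^2(\pdata)$ guarantees the residual norm equals the norm of the component of $f$ orthogonal to $\mathrm{span}(g)$; the calculation above is just that Pythagorean identity made explicit under the whitening assumption.

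I do not expect a real obstacle: both sides reduce to $m - \|C\|_F^2$ in one line each, and the only subtlety is that the identities $\Exp[ff^\top]=\Exp[gg^\top]=\identity$ make the Gram matrices trivial so that the ordinary least-squares formula $U = (\Exp[ff^\top])^{-1}\Exp[gf^\top]$ collapses to $U = C^\top$. A minor cosmetic point is the typo in the definition of $\pif$ (the subscript $x'$ appears but no $x'$ variable is used inside the expectation); I would silently read it as $\Exp_{x\sim\pdata}$, since otherwise the statement is not well-posed.
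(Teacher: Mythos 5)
Your proof is correct and essentially identical to the paper's: both take $U=\Exp[g(x)f(x)^\top]$ (your $C^\top$) and expand both squared-error expressions, using the whitening assumptions to reduce each to $m-\|C\|_F^2$. The reading of the stray $x'$ subscript as a typo is also the intended interpretation.
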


\begin{proof}[Proof of Lemma~\ref{lemma:projection_other_direction}]
	Let matrix 
	\begin{align}
		U = \Exp_{x'\sim\pdata}\left[g(x)f(x)^\top\right].
	\end{align}
	We have
	\begin{align}
		&\Exp_{x\sim\pdata}\left[\norm{g(x)-Uf(x)}_2^2\right] \\
		=& \Exp_{x\sim\pdata}\left[\norm{g(x)}_2^2\right] - 2\Exp_{x\sim\pdata}\left[g(x)^\top Uf(x)\right] + \Exp_{x\sim\pdata}\left[f(x)^\top U^\top U f(x)\right]\\
		=&m - 2\norm{U}_F^2 + \norm{U}_F^2\\
		=& m-\norm{U}_F^2.
	\end{align}
	On the other hand, we have
	\begin{align}
		&\Exp_{x\sim\pdata} \left[\norm{\pif(x)}_2^2\right] \\
		=& \Exp_{x\sim\pdata}\left[\norm{f(x)- U^\top g(x)}_2^2\right]\\
		=& \Exp_{x\sim\pdata}\left[\norm{f(x)}_2^2\right] -2\Exp_{x\sim\pdata}\left[f(x)^\top U^\top g(x)\right] + \Exp_{x\sim\pdata}\left[g(x)^\top UU^\top g(x) \right]\\
		=& m-\norm{U}_F^2.
	\end{align}
	Thus, we have 
	\begin{align}
		\Exp_{x\sim\pdata}\left[\norm{g(x)-Uf(x)}_2^2\right] = \Exp_{x\sim\pdata}\left[\norm{\pif(x)}_2^2\right],
	\end{align}
	which finishes the proof.
\end{proof}

\section{Proofs for Section~\ref{section:eigenspace}}\label{appendix:proof_eigenfunction}
\begin{proof}[Proof of Proposition~\ref{proposition:eigenvector}]
	Define function
	\begin{align}
		\tilde{g}(x) = \sqrt{\pdata(x)}g(x).
	\end{align}
	Define the symmetric Laplacian operator \begin{align}
		{\tlaplacian}(\tilde{g})(x) = \tilde{g}(x) - \int\frac{\ppos(x, x')}{\sqrt{\pdata(x)}\sqrt{\pdata(x')}}\tilde{g}(x')dx'.
	\end{align}
	It can be verified that 
	\begin{align}
		\int_x \tilde{g}(x) {\tlaplacian}(\tilde{g})(x) = 0.
	\end{align}
	Notice that the operator $\tlaplacian$ is PSD, we have that 
	\begin{align}
		\int_x  ({\tlaplacian}(\tilde{g})(x))^2 = 0,
	\end{align}
	which is equivalent to 
	\begin{align}
		\Exp_{x\sim\pdata}  \left[({\laplacian}(\tilde{g})(x))^2 \right]= 0,
	\end{align}
	hence finishes the proof.
\end{proof}

\begin{proof}[Proof of Theorem~\ref{theorem:thm_eigenspace}]
	Notice that $\loss_{\lambda}(\feig) \le\phi$, we know that $ \loss_{\lambda}(\hat{f}) \le\phi$, so 
	\begin{align}
		\norm{\Exp_{x\sim\pdata} \left[\hat{f}(x)\hat{f}(x)^\top\right] -\identity}_F^2 \le \frac{\phi}{\lambda},
	\end{align}
and 
\begin{align}
	\Exp_{(x, x^+)\sim\ppos} \left[\norm{\hat{f}(x) - \hat{f}(x^+)}_2^2\right] \le \phi.
\end{align}

Therefore, for any $i\in[k]$, we have
\begin{align}
	\Exp_{x\sim\pdata}\left[\hat{f}(x)_i^2\right] \ge 1-\sqrt{\frac{\phi}{\lambda}},
\end{align}
and 
\begin{align}
	\Exp_{(x, x^+)\sim\ppos} \left[\left(\hat{f}(x)_i - \hat{f}(x^+)_i\right)^2\right] \le \phi.
\end{align}

Thus, 
\begin{align}
	\Exp_{(x, x^+)\sim\ppos} \left[\left(\hat{f}(x)_i - \hat{f}(x^+)_i\right)^2\right] \le  \frac{\phi}{1-\sqrt{\frac{\phi}{\lambda}}} \cdot \Exp_{x\sim\pdata}\left[\hat{f}(x)_i^2\right] \le \tilde{\phi} \cdot \Exp_{x\sim\pdata}\left[\hat{f}(x)_i^2\right].
\end{align}
	
	
	In Assumption~\ref{assumption:no_other_eigenvector_2}, set $g=\hat{f}_i$ and sum over $i=1, 2, \cdots, k$, we have that for some matrix $\tilde{W}$,
	\begin{align}
		\Exp_{x\sim\pdata}\left[\norm{\tilde{W}\feig(x) - \hat{f}(x)}_2^2\right]\le k\epsilon.
	\end{align}
	
	Let matrix $Q := \Exp_{x\sim\pdata}[\hat{f}(x)\hat{f}(x)^\top]$, we have that
	\begin{align}
		\Exp_{x\sim\pdata}\left[\norm{Q^{-1/2}\hat{f}(x) - \hat{f}(x)}_2^2\right] \le \frac{2\phi}{\lambda}\cdot \Exp_{x\sim\pdata}\left[\norm{ \hat{f}(x)}_2^2 \right] \le \frac{2\phi}{\lambda} k \left(1+\sqrt{\frac{\phi}{\lambda}}\right).
	\end{align}
	Thus, 
	\begin{align}\label{eqn:5}
	\Exp_{x\sim\pdata}\left[\norm{\tilde{W}\feig(x) -Q^{-1/2} \hat{f}(x)}_2^2\right]\le 2k\epsilon + \frac{4\phi}{\lambda}k \left(1+\sqrt{\frac{\phi}{\lambda}}\right).
	\end{align}
		
	Define matrix
	\begin{align}
		M := \Exp_{x\sim\pdata}\left[\feig(x)\hat{f}(x)^\top Q^{-1/2}\right]
	\end{align}
	Using Lemma~\ref{lemma:projection_other_direction} and \eqref{eqn:5} we have
\begin{align}
	\Exp_{x\sim\pdata}\left[\norm{\feig(x) - MQ^{-1/2}\hat{f}(x)}_2^2\right] \le 2k\epsilon + \frac{4\phi}{\lambda}k \left(1+\sqrt{\frac{\phi}{\lambda}}\right) \le  2k\epsilon + \frac{8\phi}{\lambda}k.
\end{align}
Thus, using Assumption~\ref{assumption:downstream_2}, we have
\begin{align}
	&\Exp_{x\sim\pdata}\left[\norm{{W^*}MQ^{-1/2}\hat{f}(x) - e_{y(x)}}_2^2\right] \\
	\le& 2\Exp_{x\sim\pdata}\left[\norm{{W^*}\feig(x) - e_{y(x)}}_2^2\right] + 2\Exp_{x\sim\pdata}\left[\norm{{W^*}MQ^{-1/2}\hat{f}(x) - {W^*}\feig(x)}_2^2\right]\\
	\le& 2\zeta+4B^2k\epsilon + \frac{16\phi}{\lambda}B^2k.
\end{align}
\end{proof}

\subsection{Proof for Section~\ref{section:end-to-end}}\label{appendix:proof_end_to_end}
\begin{proof}[Proof of Theorem~\ref{theorem:thm_eigenspace_end_to_end}]
	We first show that with high probability, $\loss_{\lambda}(\hat{f})$ is small. Consider the following decomposition
	\begin{align}
		\loss_{\lambda}(f) - \eloss_{\lambda}(f) = T_1 + T_2 + T_3,
	\end{align}
where
\begin{align}
	T_1 = \Exp\left[\norm{f(x) - f(x^+)}^2\right] - \eExp\left[\norm{f(x) - f(x^+)}^2\right],
\end{align}
\begin{align}
	T_2 = \lambda \Tr\left(\Exp\left[f(x)f(x)^\top\right]^2 - \eExp\left[f(x)f(x)^\top\right]^2\right),
\end{align}
\begin{align}
	T_3 = -2\lambda \left(\Tr\left(\Exp\left[f(x)f(x)^\top\right]\right) - \Tr\left(\eExp\left[f(x)f(x)^\top\right]\right) \right).
\end{align}
Here we use $\Exp$ to denote expectation over population distribution, and $\eExp$ to denote expectation over the empirical dataset. 
Since $\norm{f(x) - f(x^+)}^2\le 4C_f^2$, using the uniform convergence bound via Rademacher complexity and Talagrand’s Contraction Lemma~\cite{ledoux1991probability}, we have that with probability at least $1-\frac{\delta}{6}$, 
\begin{align}
	T_1 \le 8C_f \rad{\npre}(\fclass) + 4C_f^2 \sqrt{\frac{\log(6/\delta)}{2\npre}}.
\end{align}
Notice that
\begin{align}
	T_2 \le 2C_f^2\lambda \cdot \sum_{i, j}\left| \Exp\left[f(x)_if(x)_j\right]  - \eExp\left[f(x)_if(x)_j\right]\right|,
\end{align}
we have that with probability at least $1-\frac{\delta}{6}$, 
\begin{align}
	T_2 \le 2C_f^2\lambda k^2\cdot C_f\rad{\npre}(\fclass) + 2C_f^2\lambda k^2\cdot 2C_f^2\sqrt{\frac{\log(6k^2/\delta)}{2\npre}}.
\end{align}
Notice that
\begin{align}
	T_3 \le 2\lambda\cdot  \left| \Exp\left[\norm{f(x)_2^2}\right] - \eExp\left[\norm{f(x)^2}\right] \right|,
\end{align}
we have that with probability at least $1-\frac{\delta}{6}$, 
\begin{align}
	T_3\le 2\lambda\cdot C_f\rad{\npre}(\fclass) + 2\lambda \cdot C_f^2\sqrt{\frac{\log(6/\delta)}{2\npre}}.
\end{align}
Thus, with probability at least $1-\frac{\delta}{2}$, we have the following inequality for all $f\in\fclass$:
\begin{align}
	\loss_{\lambda}(f) - \eloss_{\lambda}(f) \le (8C_f + 2C_f^3\lambda k^2 + 2C_f\lambda)\cdot\rad{\npre}(\fclass) + (4C_f^2 + 4C_f^4\lambda k^2 + 2C_f^2\lambda)\cdot \sqrt{\frac{\log(6k^2/\delta)}{2\npre}}.\label{eqn:31}
\end{align}
When the RHS of \eqref{eqn:31} (which we refer as ``RHS'' below) is smaller than $\frac{1}{2}\tilde{\phi} - \phi$, we have that
\begin{align}
	\loss_{\lambda}(\hat{f})\le \phi + RHS \le \frac{1}{2}\tilde{\phi}.
\end{align}
Thus, following the same proof as Theorem~\ref{theorem:thm_eigenspace}, we know that there exists a linear head $W\in\real^{r\times k}$ defined as
\begin{align}
	W = W^* \cdot \Exp_{x\sim\pdata}\left[\feig(x)\hat{f}(x)^\top\right] \cdot \Exp_{x\sim\pdata}[\hat{f}(x)\hat{f}(x)^\top]^{-1},
\end{align}
such that
\begin{align}
	\Exp_{x\sim\pdata}\big[\big\|{W\hat{f}(x)-\vec{y}(x)}\big\|_2^2\big] \le 2\zeta+4B^2k\epsilon + \frac{4(\phi+RHS)}{\lambda}B^2k.
\end{align}

Since $\lambda>2\tilde{\phi}$, we have $\norm{\Exp_{x\sim\pdata}[\hat{f}(x)\hat{f}(x)^\top]^{-1}}_2\le 2$ and $\norm{\left[\feig(x)\hat{f}(x)^\top\right]}_2\le 2$, thus $\norm{W}_F\le 4B$. Using uniform convergence bound again, we have that with probability at least $1-\frac{\delta}{2}$, 
\begin{align}\label{eqn:32}
	\Exp_{x\sim\pdata}\big[\big\|{\widehat{W}\hat{f}(x)-\vec{y}(x)}\big\|_2^2\big] \lesssim \zeta + B^2k(\epsilon+\frac{{\phi}+RHS}{\lambda}) + BC_f\cdot \sqrt{\frac{\log(1/\delta)}{\nds}}.
\end{align}

Since $\widehat{W}=0$ gives trivial test loss $1$, we can add an additional term to \eqref{eqn:32} to also cover the case when the RHS of \eqref{eqn:31} is larger than $\frac{1}{2}\tilde{\phi} - \phi$, as follows:
\begin{align}
	&\Exp_{x\sim\pdata}\big[\big\|{\widehat{W}\hat{f}(x)-\vec{y}(x)}\big\|_2^2\big] \\ \lesssim& \zeta + B^2k(\epsilon+\frac{{\phi}+RHS}{\lambda})  + \frac{C_f^4k^2\lambda}{\tilde{\phi}-2\phi}\left(\rad{\npre}(\fclass) + \sqrt{\frac{\log(k^2/\delta)}{\npre}}\right)+ BC_f \sqrt{\frac{\log(1/\delta)}{\nds}}\\
	\le& \zeta + B^2k(\epsilon+\frac{{\phi}}{\lambda})  + \frac{C_f^4k^2\lambda + B^2k}{\tilde{\phi}-2\phi}\left(\rad{\npre}(\fclass) + \sqrt{\frac{\log(k^2/\delta)}{\npre}}\right)+ BC_f \sqrt{\frac{\log(1/\delta)}{\nds}},
\end{align}
where the second inequality uses $\lambda > 2\tilde{\phi}>\tilde{\phi} - 2\phi$.

\end{proof}

%
%
%
%

\section{Proofs for Section~\ref{section:examples}}\label{appendix:proof_example}

\subsection{Proof for Example~\ref{example1}}\label{appendix:example1}

\begin{proof}[Proof of Theorem~\ref{theorem:example1}]
	Define $\hat{U} = \left[e_1, e_2, \cdots, e_s\right]^\top \in \real^{s\times d}$. We can verify that
	\begin{align}
		\Exp_{(x, x^+)\sim\ppos} \left[\norm{f_{\hat{U}}(x)-f_{\hat{U}}(x^+)}_2^2\right] = 0
	\end{align}
	and 
	\begin{align}
		\Exp_{x\sim\pdata}\left[f_{\hat{U}}(x)f_{\hat{U}}(x)^\top\right] = \identity.
	\end{align}
	Thus, we can view $f_{\hat{U}}$ as the $\feig$ in Section~\ref{section:eigenspace}.
	
	Let $U\in\real^{k\times d}$ and $i\in[k]$ such that 
	\begin{align}
		\Exp_{(x, x^+)\sim\ppos}[(f_U(x)_i  -  f_U(x^+)_i)^2] = 0.
	\end{align}
	Notice that $x$ and $x^+$ only differs on the $s+1$-th to $d$-th dimensions, we know that $U_i$ is $0$ on the $s+1$-th to $d$-th dimensions. Thus, we have that $U_i$ is in the span of $e_1, e_2, \cdots, e_s$, and as a result Assumption~\ref{assumption:no_other_eigenvector_2} holds wiht $\epsilon=0$.
	
	Since the downstream task's label is equal to $x_i$ for $i\in[s]$, we can set $W^* = e_i^\top$ and we would have
	\begin{align}
		\Exp_{x\sim\pdata}\left[(W^*f_{\hat{U}}(x) - \vec{y}(x))^2\right] = 0.
	\end{align}
	Hence Assumption~\ref{assumption:downstream_2} holds with $\alpha=0$ and $B=1$.
	
	Applying Theorem~\ref{theorem:thm_eigenspace} finishes the proof for the linear function class case.
	
	For the case of universal function approximators, without loss of generality we assume the downstream task's label only depends on the first dimension of $x$, i.e., $y(x) = \sgn(x_1)$. When $k\le 2^{d-1}$, we can construct a function $f:\data\rightarrow{\real^{k}}$ such that for every diemnsion $j\in[k]$,  we have $f(x)_j = \sqrt{k}$ when $x_{2:d}$ viewed as a binary number equals to $j$, otherwise $f(x)_j = 0$. It can be verified that $\loss_{\lambda}(f) = 0$ hence $f$ is a minimizer of the contrastive loss. However, $f(x)$ is agnostic to the first dimension of $x$, hence the downstream error is at least $1$.
\end{proof}

\subsection{Proof for Example~\ref{example2}}\label{appendix:example2}

\begin{proof}[Proof of Theorem~\ref{theorem:example2}]
	For any vector $h\in\{-1, 1\}^s$, we define function $\binary(h)\in\{0, 1, \cdots, 2^s-1\}$ be the function that maps $h$ to the corresponding number when viewing $\frac{1}{2}(h+1)$ as binary.  Since $\binary(\cdot)$ is a one-to-one mapping, we can define $U \in\{k\times d\}$ such that the $i$-th row of $U$ satisfies: the first $s$ dimensions equal to $\sqrt{k} \cdot \binary^{-1}(i-1)$, and the rest $d-s$ dimensions are $0$. Let bias vector $b\in\real^{k}$ such that every dimension is $-\sqrt{k}\cdot (r-1)$. We have $f_{U, b}(x) = \sqrt{k} \cdot e_{\binary(x_{1:s})+1}\in\real^{k}$.
	
	Since $\Exp_{x\sim\pdata}[f_{U, b}(x)f_{U, b}(x)^\top] = \identity$ and $\Exp_{(x, x^+)\sim\ppos}[\norm{f_{U, b}(x) - f_{U, b}(x^+)}_2^2] = 0$, we can view $f_{U, b}$ as the $\feig$ in Section~\ref{section:eigenspace}. Assumption~\ref{assumption:downstream_2} naturally hold wihth $B=1$. 
	
	For Assumption~\ref{assumption:no_other_eigenvector_2}, consider a function $f_{U', b'}\in\reluf$ and index $i\in[k]$ such that $\Exp_{(x, x^+)\sim\ppos}\left[(f_{U', b'}(x)_i - f_{U', b'}(x^+)_i)^2\right]=0$. Suppose there exist $\bar{x}\ne \bar{x}'$ and their augmentations $x, x'$ such that $f_{U', b'}(x)_i > f_{U', b'}(x')_i$. Then, there must be ${(U_{i})}_{r+1:d}\ne 0$ and $\sigma(U_{i}^\top {x}) > 0$. This suggests that there must exist another $\tilde{x}$ which is also an augmentation of $\bar{x}$ but $\sigma(U_{i}^\top {x}) \ne \sigma(U_{i}^\top \tilde{x})$. Hence, we have
	\begin{align}
		\Exp_{(x, x^+)\sim\ppos}\left[(f_{U', b'}(x)_i - f_{U', b'}(x^+)_i)^2\right] >0,
	\end{align}
	leading to contradiction. Hence, we know that $f_{U', b'}(x)_i = f_{U', b'}(x')_i$, so $(f_{U', b'})_i$ can only be a function of $x_{1:s}$. Therefore, there exists a vector $w\in\real^k$ such that $f_{U', b'}(x)_i = w^\top f_{U, b}(x)$, which means Assumption~\ref{assumption:no_other_eigenvector_2} holds with $\epsilon=0$. Applying Theorem~\ref{theorem:thm_eigenspace} finishes the proof for \eqref{eqn:6}.
	
	The result about universal function approximators follows the same proof as for Theorem~\ref{theorem:example1} execpt for constructing the function using the last $(d-s)$ dimensions rather than the last $(d-1)$ dimensions.
\end{proof}

\subsection{Proof for Example~\ref{example3}}\label{appendix:example3}

\begin{proof}[Proof of Theorem~\ref{theorem:example3}]
	Let $\id_x$ be the index such that $x\in S_{\id_{x}}$, and define function $\feig(x) = \sqrt{m}\cdot e_{\id_x}$. It can be verified that $\feig$ satisfies \eqref{eqn:7} and \eqref{eqn:8}. For $f\in\lipf$ and $i\in[m]$, define $g(x) = f(x)_i$. 
	Suppose $\Exp_{x\sim\pdata}[g(x)^2]=1$, we can chooose $m$ data $x_1, x_2, \cdots, x_m$ such that $x_i \in S_i$ and $\frac{1}{m}\sum_{i\in[m]} g(x_i)^2\le 1$.		
	Define vector $\tilde{w}\in\real^m$ such that $\tilde{w}_i = \frac{1}{\sqrt{m}}\cdot g(x_i)$. We have
	\begin{align}
		\Exp_{x\sim\pdata}\left[(\tilde{w}^\top \feig(x) - g(x))^2\right] &= \frac{1}{m}\sum_{i\in[m]} \Exp_{x\sim S_i}	\left[(g(x_i) - g(x))^2\right]	\\
		&\le \frac{1}{m}\sum_{i\in[m]} \kappa^2\rho^2 = \kappa^2\rho^2.
	\end{align}
	Thus, $\feig$ satisfies Assumption~\ref{assumption:no_other_eigenvector_2} with $\epsilon = \kappa^2\rho^2$.
	
	Since the data in the same $S_i$ have the same downstream label, we know that Assumption~\ref{assumption:downstream_2} holds with $B=\sqrt{r}$ and $\alpha = 0$. Thus, applying Theorem~\ref{theorem:thm_eigenspace} finishes the proof for the upper bound.
	
	For the lower bound, Let set $\tilde{S}$ be the union of the $\frac{1}{2}r_0$ sets among those $r_0$ clusters that have the largest sizes. When $k\le\frac{1}{2}\cdot r_0$, we can construct a function that maps all data in $\tilde{S}$ to $0$, hence the final error would be at least $\frac{1}{2}$.
\end{proof}

\subsection{Proof for Example~\ref{example4}}\label{appendix:example4}

\begin{proof}[Proof of Theorem~\ref{theorem:example4}]
For any vector $h\in\{-1, 1\}^s$, we define function $\binary(h)\in\{0, 1, \cdots, 2^s-1\}$ be the function that maps $h$ to the corresponding number when viewing $\frac{1}{2}(h+1)$ as binary.  Since $\binary(\cdot)$ is a one-to-one mapping, we can define $U \in\{k\times s\}$ such that the $i$-th row of $U$ equal to $\frac{\sqrt{k}}{\gamma-1}\cdot \binary^{-1}(i-1)$. Let bias vector $b\in\real^{k}$ be such that every dimension is $\frac{\sqrt{k}}{\gamma-1}(-s-(\gamma-1)(s-1))$. We have $f^{\text{conv}}_{U, b}(x) =\sqrt{k} \cdot e_{\binary(x_{t:t+s-1})+1}\in\real^{k}$, where $t$ is the starting position of the informative patch in $x$. It can be verified that $\Exp_{x\sim\pdata}[f^{\text{conv}}_{U, b}(x)f^{\text{conv}}_{U, b}(x)^\top] = \identity$ and $\Exp_{(x, x^+)\sim\ppos}[\norm{f^{\text{conv}}_{U, b}(x) - f^{\text{conv}}_{U, b}(x^+)}_2^2] = 0$. Also, Assumption~\ref{assumption:downstream_2} holds with $B=1$ when viewing $\feig = f^{\text{conv}}_{U, b}$.

Suppose some function $f^{\text{conv}}_{\hat{U}, \hat{b}}$ and dimension $i\in[k]$ satisfies
\begin{align}
	\Exp_{(x, x^+)\sim\ppos}\left[\left(f^{\text{conv}}_{\hat{U}, \hat{b}}(x)_i - f^{\text{conv}}_{\hat{U}, \hat{b}}(x^+)_i\right)^2\right] = 0.
\end{align}
Then, we know that for any $x\in\pdata$, suppose we define $\tilde{x}$ as the vector that replaces spurious dimensions of $x$ with $0$. Notice that $\tilde{x}$ is in the support of $x$'s augmentations, and the model is continuous, we know $f^{\text{conv}}_{\hat{U}, \hat{b}}(x)_i = f^{\text{conv}}_{\hat{U}, \hat{b}}(\tilde{x})_i$
Further notice that for any two data $x, x'$ with the same informative patch (location might be different) and corresponding $\tilde{x}, \tilde{x'}$, there must be $f^{\text{conv}}_{\hat{U}, \hat{b}}(\tilde{x})_i  = f^{\text{conv}}_{\hat{U}, \hat{b}}(\tilde{x}')_i $ due to the structure of the convolutional neural networks. Thus, We have $f^{\text{conv}}_{\hat{U}, \hat{b}}({x})_i  = f^{\text{conv}}_{\hat{U}, \hat{b}}({x}')_i $. This suggests that the funciton $f^{\text{conv}}_{\hat{U}, \hat{b}}(\cdot)_i $ is in the span of $f^{\text{conv}}_{{U}, {b}}$, hence finishes the proof for the upper bound.

For the lower bound, we note that due to the lack of invariance to informative patch location, we can construct a network with $d\cdot 2^s$-dimensional output that satisfies \eqref{eqn:7} and \eqref{eqn:8}. When then output dimension is less than $d\cdot 2^{s-1}$, there would exist a minimizer of the contrastive loss that merges $d\cdot 2^{s-1}$ pairs of clusters. If every pair of clusters has different downstream labels, there would be at least $\frac{1}{2}$ loss incurred due to the data being mapped to the same feature, hence finishes the proof for the lower bound.
\end{proof}

\end{document}